\documentclass{article}
\usepackage[utf8]{inputenc}
\usepackage[left=2cm, right=3cm, top=2cm]{geometry}
\usepackage{graphicx}
\usepackage{subcaption}
\usepackage{amsmath}
\usepackage{amsfonts} 
\usepackage{algorithm}
\usepackage[noend]{algpseudocode}
\usepackage{bbm}
\newcommand{\citep}{\cite}

\usepackage{url}
\usepackage{amsmath,amsthm,amssymb,amsbsy}
\usepackage{mathdots}
\usepackage{paralist}
\usepackage{xcolor}
\usepackage{color}
\usepackage{graphicx}
\usepackage{algorithm,algpseudocode}
\usepackage{comment}

\usepackage{fancyhdr}
\usepackage{cite}
\usepackage{cleveref}
\usepackage{enumerate}


\newtheorem{thm}{Theorem}
\newtheorem{cor}{Corollary}
\newtheorem{prop}{Proposition}
\newtheorem{lem}{Lemma}
\theoremstyle{remark}
\newtheorem{remark}{Remark}

\newcommand{\R}{\mathbb{R}}





\newcommand{\vct}[1]{\boldsymbol{#1}}
\newcommand{\mtx}[1]{\boldsymbol{#1}}


\newcommand{\inner}[1]{\left<#1\right>}




%







\newcommand{\floor}[1]{\left\lfloor #1 \right\rfloor}
\newcommand{\ceil}[1]{\left\lceil #1 \right\rceil}
\newcommand{\eps}{\epsilon}


\newcommand{\calS}{\mathcal{S}}
\newcommand{\calT}{\mathcal{T}}

\newcommand{\calF}{\mathcal{F}}
\newcommand{\calC}{\mathcal{C}}

\newcommand{\va}{\vct{a}}
\newcommand{\vb}{\vct{b}}

\newcommand{\ve}{\vct{e}}

\newcommand{\vk}{\vct{k}}

\newcommand{\vm}{\vct{m}}
\newcommand{\vn}{\vct{n}}

\newcommand{\vu}{\vct{u}}
\newcommand{\vv}{\vct{v}}

\newcommand{\vx}{\vct{x}}
\newcommand{\vy}{\vct{y}}
\newcommand{\vz}{\vct{z}}

\newcommand{\mA}{\mtx{A}}

\newcommand{\mD}{\mtx{D}}

\newcommand{\mM}{\mtx{M}}

\newcommand{\mU}{\mtx{U}}
\newcommand{\mV}{\mtx{V}}

\newcommand{\mX}{\mtx{X}}
\newcommand{\mY}{\mtx{Y}}

\newcommand{\mPhi}{\mtx{\Phi}}
\newcommand{\mPsi}{\mtx{\Psi}}

\newcommand{\mId}{{\bf I}}

\setcounter{MaxMatrixCols}{20}

\pagestyle{plain}

\graphicspath{{./figs/}}

\newlength{\imgwidth}
\setlength{\imgwidth}{3.125in}

\newboolean{twoColVersion}
\setboolean{twoColVersion}{false}
\newcommand{\twoCol}[2]{\ifthenelse{\boolean{twoColVersion}} {#1} {#2} }

\newcommand{\calL}{\mathcal{L}}
\newcommand{\calN}{\mathcal{N}}
\newcommand{\calP}{\mathcal{P}}
\newcommand{\calY}{\mathcal{Y}}
\newcommand{\calE}{\mathcal{E}}

\DeclareMathOperator*{\supp}{\text{supp}}

\title{Neural Network Approximation of Continuous Functions in High Dimensions with Applications to Inverse Problems}
\author{Santhosh Karnik, Rongrong Wang, and Mark Iwen \thanks{Santhosh Karnik, Rongrong Wang, and Mark Iwen are with the Department of Computational Mathematics, Science, and Engineering at Michigan State University. Rongrong Wang and Mark Iwen are also with the Department of Mathematics at Michigan State University (e-mail: karniksa@msu.edu, wangron6@msu.edu, iwenmark@msu.edu).}
}

\begin{document}

\maketitle

\begin{abstract}
The remarkable successes of neural networks in a huge variety of inverse problems have fueled their adoption in disciplines ranging from medical imaging to seismic analysis over the past decade. However, the high dimensionality of such inverse problems has simultaneously left current theory, which predicts that networks should scale exponentially in the dimension of the problem, unable to explain why the seemingly small networks used in these settings work as well as they do in practice. To reduce this gap between theory and practice, we provide a general method for bounding the complexity required for a neural network to approximate a H\"older (or uniformly) continuous function defined on a high-dimensional set with a low-complexity structure. The approach is based on the observation that the existence of a Johnson-Lindenstrauss embedding $\mA\in\mathbb{R}^{d\times D}$ of a given high-dimensional set $\calS\subset\mathbb{R}^D$ into a low dimensional cube $[-M,M]^d$ implies that for any H\"older (or uniformly) continuous function $f:\calS\to\mathbb{R}^p$, there exists a H\"older (or uniformly) continuous function $g:[-M,M]^d\to\mathbb{R}^p$ such that $g(\mA\vx)=f(\vx)$ for all $\vx\in\calS$. Hence, if one has a neural network which approximates $g:[-M,M]^d\to\mathbb{R}^p$, then a layer can be added that implements the JL embedding $\mA$ to obtain a neural network that approximates $f:\calS\to\mathbb{R}^p$. By pairing JL embedding results along with results on approximation of H\"older (or uniformly) continuous functions by neural networks, one then obtains results which bound the complexity required for a neural network to approximate H\"older (or uniformly) continuous functions on high dimensional sets. The end result is a general theoretical framework which can then be used to better explain the observed empirical successes of smaller networks in a wider variety of inverse problems than current theory allows.
\end{abstract}

\section{Introduction}
At present various network architectures (NN, CNN, ResNet) achieve state-of-the-art performance for a broad range of inverse problems including matrix completion \citep{zheng2016neural,monti2017geometric,dziugaite2015neural,he2017neural}, image-deconvolution \citep{xu2014deep,kupyn2018deblurgan,nah2017deep}, low-dose CT-reconstitution \citep{chen2017low}, and electric and magnetic inverse Problems \citep{coccorese1994neural} (seismic analysis, electromagnetic scattering). However, since these problems are very high dimensional, classical universal approximation theory for such networks provides very pessimistic estimates of the network sizes required to learn such inverse maps (i.e., as being much larger than what standard computers can store, much less train). As a result, a gap still exists between the widely observed successes of networks in practice and the network size bounds provided by current theory in many inverse problem applications. 
 The purpose of this paper is to provide a refined bound on the size of networks in a wide range of such applications and to show that the network size is indeed affordable in many inverse problem settings. In particular, the bound developed herein depends on the model complexity of the domain of the forward map instead of the domain's extrinsic input dimension, and therefore is much smaller in a wide variety of model settings.

To be more specific, recall in most inverse problems one aims to recover some signal $\vx$ from its measurements $\vy = F(\vx)$.  Here $\vy$ and $\vx$ could both be high dimensional vectors, or even matrices and tensors, and $F$, which is called the forward map/operator, could either be linear or nonlinear with various regularity conditions depending on the application.  In all cases, however, recovering $\vx$ from $\vy$ amounts to inverting $F$. In other words, one aims to find the operator $F^{-1}$, that sends every measurement $\vy$ back to the original signal $\vx$.
Depending on the specific application of interest, there are various commonly considered forms of the forward map $F$.  For example, $F$ could be a linear map from high to low dimensions as in compressive sensing applications; $F$ could be a convolution operator that computes the shifted local blurring of an image in an image deblurring setting; $F$ could be a mask that filters out the unobserved entries of the data as in the matrix completion application; or $F$ could also be the source-to-solution map of a differential equation as in ODE/PDE based inverse problems. 

In most of these applications, the inverse operator $F^{-1}$  does not possess a closed-form expression. As a result, in order to approximate the inverse one commonly uses analytical approaches that involve solving, e.g., an optimization problem. Take sparse recovery as an example. With prior knowledge that the true signal $\vx\in \mathbb{R}^n$ is sparse, it is known that one can recover it from under-determined measurements, $\mathbb{R}^m \ni \vy=\mPhi\vx$ with $m<n$, by solving the optimization problem 
\[
\widehat{\vx} = \arg\min_{\vz} \|\vz\|_0, \quad \mPhi\vz = \vy.
\]
As a result, one can see that linear measurement map $F$ defined by $F(\vx) = \mPhi\vx = \vy$, with its domain restricted to the low-complexity domain of $s$-sparse vectors $\Sigma_s \subset \mathbb{R}^n$, has an inverse, $F^{-1}:  \mPhi(\Sigma_s) \rightarrow \Sigma_s$.  And, the minimizer $\widehat{\vx}$ above satisfies $\widehat{\vx} = F^{-1}(\vy) = F^{-1} (\mPhi \vx)= \vy$ for all $\vx \in \Sigma_s$. 

Note that traditional optimization-based approaches could be extremely slow for large-scale problems (e.g., for $n$ large above).  Alternatively, we can approximate the inverse operator by a neural network.  Amortizing the initial cost of an expensive training stage, the network can later achieve unprecedented speed over time at the test stage leading to better total efficiency over its lifetime.  To realize this goal, however, we need to first find a neural network architecture $f_{\theta}$, and train it to approximate $F^{-1}$, so that the  approximation error $\max_{\vy}\|f_{\theta}(\vy)- F^{-1}(\vy)\|=\|f_{\theta}(\vy)-\vx\|$ is small. The purpose of this paper is to provide a unified way to give a meaningful estimation of the size of the network that one can use in situations where the domain of $F$ is low-complexity, as is the case in, e.g., compressive sensing, low-rank matrix completion, deblurring with low-dimensional signal assumptions, etc..

\section{Main Results}
We begin by stating a few definitions. We say that a neural network \emph{$\eps$-approximates} a function $f$ if the function implemented by the neural network $\widehat{f}$ satisfies $\|\widehat{f}(\vx)-f(\vx)\|_{\infty} \le \eps$ for all $\vx$ in the domain of $f$. We say that a neural network architecture $\eps$-approximates a function class $\calF$ if for any function $f \in \calF$, there exists a choice of edge weights and node bias parameters such that the function $\widehat{f}$ implemented by the neural network with that choice of edge weights and node bias parameters satisfies $\|\widehat{f}(\vx)-f(\vx)\|_{\infty} \le \eps$ for all $\vx$ in the domain of $f$. For brevity, we refer to a feedforward neural network with at most $\calN$ nodes, $\calE$ edges\footnote{Throughout this paper, we use the term ``edges'' to denote the number of connections between nodes in a feedforward neural network. Many papers refer to these as ``weight parameters'' or just ``weights''.}, and $\calL$ layers as a $(\calN,\calE,\calL)$-FNN.  We also refer to a feedforward neural network architecture with at most $\calN$ nodes, $\calE$ edges, and $\calL$ layers as a $(\calN,\calE,\calL)$-FNN architecture. 
Similarly, we will refer to a convolutional neural network with at most $\calN$ nodes, $\calP$ parameters, and $\calL$ layers as a $(\calN,\calP,\calL)$-CNN.  And, we will also refer to a convolutional neural network architecture with at most $\calN$ nodes, $\calP$ parameters, and $\calL$ layers as a $(\calN,\calP,\calL)$-CNN architecture.

We say that a function $\Delta : [0,\infty) \to [0,\infty)$ is a modulus of continuity if it is non-decreasing and satisfies $\lim_{r \to 0^+}\Delta(r) = \Delta(0) = 0$. We say that a function $f$ between Euclidean spaces admits a modulus of continuity $\Delta$ if $\Delta$ is a modulus of continuity, and if $\|f(\vx)-f(\vx')\|_2 \le \Delta(\|\vx-\vx'\|_2)$ holds for all $\vx,\vx'$ in the domain of $f$. For brevity, we will refer to such a function as a $\calC(\Delta(r))$-function. 

For any constants $L > 0$ and $\alpha \in (0,1]$, we say that a function $f$ between Euclidean spaces is $(L,\alpha)$-H\"{o}lder if $f$ admits $\Delta(r) = Lr^{\alpha}$ as a modulus of continuity. We also say that a function $f$ is $\alpha$-H\"{o}lder if $f$ is $(L,\alpha)$-H\"{o}lder for some constant $L > 0$. 

Finally, for any positive integers $d < D$, any set $\calS \subset \R^D$, and any constant $\rho \in (0,1)$, we say that a matrix $\mA \in \R^{d \times D}$ is a $\rho$-JL (Johnson-Lindenstrauss) embedding of $\calS$ into $\mathbb{R}^d$ if  $$(1-\rho)\|\vx-\vx'\|_2 \le \|\mA\vx-\mA\vx'\|_2 \le (1+\rho)\|\vx-\vx'\|_2 \quad \text{holds for all} \quad \vx,\vx' \in \calS.$$ If we furthermore have $\mA(\calS) := \{\mA\vx : \vx \in \calS\} \subset \calT$, we say that $\mA$ is a $\rho$-JL embedding of $\calS$ into $\calT$. Intuitively, a $\rho$-JL embedding of $\calS$ into $\R^d$ maps $\calS$ from a high-dimensional space to a low-dimensional space without significantly distorting the Euclidean distances between points. \\ 

\noindent \textbf{Contributions and Related Work:}  Existing universal approximation theorems for various types of neural networks are mainly stated for functions defined on an $d$-dimensional cube. Our main contribution is to generalize these results to functions that admit a specified modulus of continuity (e.g., H\"{o}lder continuous functions) that are defined on arbitrary JL-embeddable subsets of very high dimensional Euclidean space.  We then demonstrate how our results can be applied to various example inverse problems in order to obtain reasonable estimates of the network sizes needed in each case we consider.


More explicitly, we show that if there exists a $\rho$-JL embedding of a high-dimensional set $\calS \subset \R^D$ into a low-dimensional cube $[-M,M]^d$, then we can use any neural network architecture which can $\eps$-approximate all $\left(\tfrac{L}{(1-\rho)^{\alpha}},\alpha \right)$-H\"{o}lder functions on $[-M,M]^d$ to construct a new neural network architecture which can $\eps$-approximate all $(L,\alpha)$-H\"{o}lder functions defined on $\calS$. To establish this, we show that if there exists a $\rho$-JL embedding $\mA \in \R^{d \times D}$ of $\calS \subset \R^D$ into $d$-dimensions, then for any $(L,\alpha)$-H\"{o}lder function $f : \calS \to \R^p$, there exists a $\left(\tfrac{L}{(1-\rho)^{\alpha}},\alpha \right)$-H\"{o}lder function $g : [-M,M]^d \to \R^p$ (where $M = \sup_{\vx \in \calS}\|\mA\vx\|_{\infty}$) such that $g(\mA\vx) = f(\vx)$ for all $\vx \in \calS$. Hence, if we have a neural network which can approximate $g : [-M,M]^d \to \R^p$, then we can compose it with a neural network which implements the JL embedding $\mA$ to obtain a neural network which approximates $f : \calS \to \R^p$. By pairing JL embedding existence results along with results on the approximation of H\"{o}lder functions by neural networks, we obtain results which bound the complexity required for a neural network to approximate H\"{o}lder functions on low-complexity high-dimensional sets. We can also generalize the above argument to extend results for functions with a specified modulus of continuity to higher dimensions in an analogous fashion.

The expressive power of neural networks is important in applications as a means of both guiding network architecture design choices, as well as for providing confidence that good network solutions exist in general situations.  As a result, numerous results about neural network approximation power have been established in recent years (see, e.g., \citep{zhou2020universality,petersen2020equivalence,yarotsky2022universal,Yarotsky18,lin2018resnet}). Most results concern the approximation of functions on all of $\R^D$, however, and yield network sizes that increase exponentially with the input dimension $D$.  As a result, the high dimensionality of many inverse problems leads to bounds from most of the existing literature which are too large to explain the observed empirical success of neural network approaches in such applications.

A similar high-dimensional scaling issue arises in many image classification tasks as well. Motivated by this setting  \citep{chen2019efficient} refined previous approximation results for ReLU networks, and showed that input data that is close to a low-dimensional manifold leads to network sizes that only grow exponentially with respect to the intrinsic dimension of the manifold. However, this improved bound relies on the data fitting a manifold assumption, which is quite strong in the setting of inverse problems.  For example, even the ``simple" compressive sensing/sparse recovery problem discussed above does not have a domain/range that forms a manifold (note that the intersections of $s$-dimensional subspaces in $\Sigma_s$ prevent it from being a manifold). Therefore, to study the expressive power of networks in the context of a class of inverse problems which is at least large enough to include compressive sensing, one needs to remove such strict manifold assumptions. Another mild issue with such manifold results is that the number of neurons also depends on the curvature of the manifold in question which can be difficult to estimate.  Furthermore, such curvature dependence is unavoidable for manifold results and needs to be incorporated into any valid bounds.\footnote{To see why, e.g., curvature dependence is unavoidable, consider any discrete training dataset in a compact ball. There always exists a 1-dimensional manifold, namely a curve, that goes through all the data points.  Thus, the mere existence of the 1-dimensional manifold does not mean the data complexity is low. Curvature information and other manifold properties matter as well!}

\begin{figure}
\centering
\includegraphics[height=3.5cm]{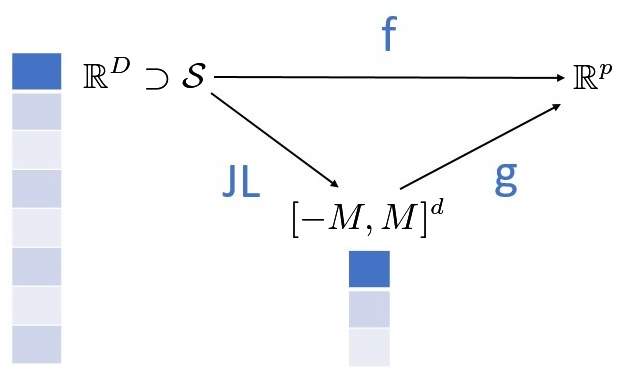}
\caption{If there exists a $\rho$-JL embedding of $\calS \subset \R^D$ into $[-M,M]^d$, then we can write the target function $f : \calS \to \R^p$ as $f = g \circ JL$ where $g : [-M,M]^d \to \R^p$. So, we can then construct a neural network approximation of $f$ by using a neural network approximation of $g$ and adding a layer to implement the JL embedding.}
\label{fig:1}
\end{figure}

The idea of using a JL embedding to reduce the dimensionality of the input before feeding it into a neural network was studied in \citep{wojcik2019training}. There, the authors perform experiments on the webspam\citep{webb2006introducing}, url \citep{ma2009identifying}, KDD2010-a, and KDD2010-b datasets \citep{yu2010feature}, which have input data with millions of dimensions. They show that a deep neural network with an untrained random projection layer that reduces the dimensionality down to $1000$ can improve on the state of the art results or achieve competitive performance on these datasets. Furthermore, the dimensionality reduction allows the neural network to be trained efficiently. However, \citep{wojcik2019training} did not provide any theoretical guarantees for the performance of a neural network with a linear dimensionality reduction layer.

We note that \citep{kratsios2022universal} have recently independently proposed a more general framework where a geometric deep learning model first applies a continuous and injective map $\phi$ to transform the input space to a manifold, after which a composition of exponential maps and DNNs is used to obtain the output. Applying a JL-transform to $\vx \in \mathcal{S}$ as done herein is similar, but there are some key differences. In our work, the JL-embedding is implemented by the first few layers of the neural network in order to compress the input data before it goes through the rest of the network. Their more general map $\phi$ is focused on feature extraction, and it is not implemented by a neural network. So, both $\phi$ and $\mathcal{S}$ must be known a priori in practice. More specifically, our results only require an upper bound on the intrinsic dimensionality of $\mathcal{S}$ in order to utilize theory guaranteeing the existence of linear JL-embeddngs with few rows.  Thus, neither detailed a priori knowledge of $\mathcal{S} \subset \mathbb{R}^D$ nor of the linear map is required herein.  As a result, our approach is significantly simpler to implement and apply in practice, and can often be utilized even when $S$ is only partially and/or approximately known.

We now state our main technical theorems which allow us to prove our main results. First, we give a theorem for extending feed-forward neural network approximation results from uniformly continuous functions on a low-dimensional hypercube to a high-dimensional set. See Appendix~\ref{sec:ProofMainContinuous} for the proof.

\begin{thm}
\label{thm:MainContinuousFNN}
Let $d < D$ be positive integers, and let $M > 0$, $\rho \in (0,1)$, and $\eps > 0$ be constants. Let $\Delta(r)$ be a modulus of continuity. Let $\calS \subset \R^D$ be a bounded subset for which there exists a $\rho$-JL embedding $\mA \in \R^{d \times D}$ of $\calS$ into $[-M,M]^d$. 

(a) Suppose that any $\calC(\sqrt{p}\Delta(\tfrac{r}{1-\rho}))$-function $g : [-M,M]^d \to \R^p$ can be $\eps$-approximated by a $(\calN,\calE,\calL)$-FNN. Then, any $\calC(\Delta(r))$ function $f : \calS \to \R^p$ can be $\eps$-approximated by a $(\calN+D,\calE+Dd,\calL+1)$-FNN.

(b) Furthermore, if there exists a single $(\calN,\calE,\calL)$-FNN architecture that can $\eps$-approximate every $\calC(\sqrt{p}\Delta(\tfrac{r}{1-\rho}))$-function $g : [-M,M]^d \to \R^p$, then there also exists another $(\calN+D,\calE+Dd,\calL+1)$-FNN architecture that can $\eps$-approximate every $\calC(\Delta(r))$-function $f : \calS \to \R^p$.
\end{thm}

We also provide a version of this theorem for extending approximation results for CNNs.  See Appendix~\ref{sec:ProofMainContinuous} for the proof.

\begin{thm}
\label{thm:MainContinuousCNN}
Let $d < D$ be positive integers, and let $M > 0$, $\rho \in (0,1)$, and $\eps > 0$ be constants. Let $\Delta(r)$ be a modulus of continuity. Let $\calS \subset \R^D$ be a bounded subset for which there exists a $\rho$-JL embedding $\mA \in \R^{d \times D}$ of $\calS$ into $[-M,M]^d$ which is of the form $\mA = \mM\mD$, where $\mM$ is a partial circulant matrix, and $\mD$ is a diagonal matrix with $\pm 1$ on its diagonal. 

(a) Suppose that any $\calC(\sqrt{p}\Delta(\tfrac{r}{1-\rho}))$-function $g : [-M,M]^d \to \R^p$ can be $\eps$-approximated by a $(\calN,\calP,\calL)$-CNN. Then, any $\calC(\Delta(r))$-function $f : \calS \to \R^p$ can be $\eps$-approximated by a $(\calN+4D,\calP+2D,\calL+4)$-CNN.

(b) Furthermore, if there exists a single $(\calN,\calP,\calL)$-CNN that can $\eps$-approximate every $\calC(\sqrt{p}\Delta(\tfrac{r}{1-\rho}))$-function $g : [-M,M]^d \to \R^p$, then there also exists another $(\calN+4D,\calP+2D,\calL+4)$-CNN that can $\eps$-approximate every $\calC(\Delta(r))$-function $f : \calS \to \R^p$.
\end{thm}

We also provide improvements for both of the above theorems which state that for the special case of H\"{o}lder functions, whose modulus of continuity is $\Delta(r) = Lr^{\alpha}$, the $\sqrt{p}$ factor in $\sqrt{p}\Delta(\tfrac{r}{1-\rho})$ can be removed. See Appendix~\ref{sec:ProofMainHolder} for the proofs.

\begin{thm}
\label{thm:MainHolderFNN}
Let $d < D$ be positive integers, and let $L > 0$, $M > 0$, $\alpha \in (0,1]$, $\rho \in (0,1)$, and $\eps > 0$ be constants. Let $\calS \subset \R^D$ be a bounded subset for which there exists a $\rho$-JL embedding $\mA \in \R^{d \times D}$ of $\calS$ into $[-M,M]^d$. 

(a) Suppose that any $\left(\tfrac{L}{(1-\rho)^{\alpha}},\alpha \right)$-H\"{o}lder function $g : [-M,M]^d \to \R^p$ can be $\eps$-approximated by a $(\calN,\calE,\calL)$-FNN. Then, any $(L,\alpha)$-H\"{o}lder function $f : \calS \to \R^p$ can be $\eps$-approximated by a $(\calN+D,\calE+Dd,\calL+1)$-FNN. 

(b) Furthermore, if there exists a single $(\calN,\calE,\calL)$-FNN architecture that can $\eps$-approximate every $\left(\tfrac{L}{(1-\rho)^{\alpha}},\alpha \right)$-H\"{o}lder function $g : [-M,M]^d \to \R^p$, then there also exists another $(\calN+D,\calE+Dd,\calL+1)$-FNN architecture that can $\eps$-approximate every $(L,\alpha)$-H\"{o}lder function $f : \calS \to \R^p$.
\end{thm}

\begin{thm}
\label{thm:MainHolderCNN}
Let $d < D$ be positive integers, and let $L > 0$, $M > 0$, $\alpha \in (0,1]$, $\rho \in (0,1)$, and $\eps > 0$ be constants. Let $\calS \subset \R^D$ be a bounded subset for which there exists a $\rho$-JL embedding $\mA \in \R^{d \times D}$ of $\calS$ into $[-M,M]^d$ which is of the form $\mA = \mM\mD$, where $\mM$ is a partial circulant matrix, and $\mD$ is a diagonal matrix with $\pm 1$ on its diagonal.

(a) Suppose that any $\left(\tfrac{L}{(1-\rho)^{\alpha}},\alpha\right)$-H\"{o}lder function $g : [-M,M]^d \to \R^p$ can be $\eps$-approximated by a $(\calN,\calP,\calL)$-CNN. Then, any $(L,\alpha)$-H\"{o}lder function $f : \calS \to \R^p$ can be $\eps$-approximated by a $(\calN+4D,\calP+2D,\calL+4)$-CNN. 

(b) Furthermore, if there exists a single $(\calN,\calP,\calL)$-CNN architecture that can $\eps$-approximate every $\left(\tfrac{L}{(1-\rho)^{\alpha}},\alpha \right)$-H\"{o}lder function $g : [-M,M]^d \to \R^p$, then there also exists another $(\calN+4D,\calP+2D,\calL+4)$-CNN architecture that can $\eps$-approximate every $(L,\alpha)$-H\"{o}lder function $f : \calS \to \R^p$.
\end{thm}

\begin{remark} These theorems ensure that the network size for approximating $f$ grows exponentially with the compressed dimension $d$ instead of growing exponentially with the input dimension $D$. The task now reduces to making the compressed dimension $d$ as small as possible while still ensuring that a $\rho$-JL embedding of $\calS$ into $[-M,M]^d$ exists.
\end{remark}
\begin{remark}These theorems are quite general as parts (a) and (b) are not restricted to any particular type of network or activation function. In Section~\ref{sec:MainResults}, we provide three corollaries of these theorems which establish the expressive power of the feedforward and convolutional neural networks. 
\end{remark}
\begin{remark}
If an inverse operator is H\"{o}lder continuous and there exists a $\rho$-JL embedding of the set of possible observations $\calS$ into $d$ dimensions, then the theorem gives us a bound on the complexity of a neural network architecture required to approximate the inverse operator.
\end{remark}
\begin{remark}
A key ingredient in the above theorems is that for any uniformly (or H\"{o}lder) continuous function $f : \calS \to \R^p$ and any $\rho$-JL embedding $\mA \in \R^{d \times D}$ of $\calS$ into a hypercube $[-M,M]^d$, there exists a uniformly (or H\"{o}lder) continuous function $g : [-M,M]^d \to \R^p$ such that $g(\mA\vx) = f(\vx)$ for all $\vx \in \calS$. Unfortunately, this is not the case if we replace uniform (or H\"{o}lder) continuity with differentiability. In Appendix~\ref{sec:Example}, we provide an example of a set $\calS \subset \R^D$ and a smooth function $f : \calS \to \R$ for which there is no $\rho$-JL embedding $\mA \in \R^{d \times D}$ (with $d < D$) of $\calS$ into a hypercube $[-M,M]^d$ and differentiable function $g : [-M,M]^d \to \R^p$ such that $g(\mA\vx) = f(\vx)$ for all $\vx \in \calS$. As such, we are not able to use our main idea to extend approximation results about differentiable functions.
\end{remark}
\subsection{JL embeddings, covering numbers and Gaussian width}

As the existence of the JL map is a critical assumption of our theorem, in this section, we discuss the sufficient conditions for this assumption to hold.  In addition, we also care about the structures of the JL maps, as they will end up being the first layer of the final neural network. For example, if the neural network is of convolution type, we need to make sure that a circulant JL matrix exists.

\textbf{Existence of $\rho$-JL maps}: It is well-known that for finite sets $\mathcal{S}$, the existence of a $\rho$-JL embedding can be guaranteed by the Johnson-Lindenstrauss Lemma. For sets $\mathcal{S}$ with infinite cardinally, the Johnson-Lindenstrauss lemma cannot be directly used. In the following proposition, we extend the Johnson-Lindenstrauss lemma from a finite set of $n$ points to a general set $\calS$.  See Appendix~\ref{sec:ProofJLFiniteToInfinite} for its proof.

\begin{prop}
\label{prop:JLFiniteToInfinite}
Let $\rho\in (0,1)$. For $\mathcal{S} \subseteq \mathbb{R}^D$, define $$U_{\calS} := \overline{\left\{\dfrac{\vx-\vx'}{\|\vx-\vx'\|_2} \ : \ \vx, \vx' \in \calS ~s.t.~ \vx \neq \vx'~\right\}}$$ to be the closure of the set of unit secants of $\calS$, and $\mathcal{N}(U_{\mathcal{S}}, \|\cdot\|_2, \delta)$ to be the covering number of $U_{\mathcal{S}}$ with $\delta$-balls. 
Then, there exists a set $\calS_1$ with $|\calS_1| = 2\mathcal{N}(U_{\mathcal{S}}, \|\cdot\|_2, \delta)$ points such that if a matrix $\mA \in \R^{d \times D}$ is a $\rho$-JL embedding of $\calS_1$, then $\mA$ is also a $(\rho+2\|\mA\|_2\delta)$-JL embedding of $\calS$.
\end{prop}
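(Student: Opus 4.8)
The plan is to build the finite set $\calS_1$ as the union of a $\delta$-cover of $U_{\calS}$ together with a companion set of points from $\calS$ realizing (approximately) those secant directions, and then to transfer the JL property from $\calS_1$ to $\calS$ by a perturbation argument. First I would fix a $\delta$-net $\{\vu_1,\dots,\vu_N\}$ of $U_{\calS}$ in the $\|\cdot\|_2$ metric, where $N = \mathcal{N}(U_{\calS},\|\cdot\|_2,\delta)$, so that every unit secant of $\calS$ lies within $\delta$ of some $\vu_i$. Since each $\vu_i$ is a limit of unit secants $\tfrac{\vx-\vx'}{\|\vx-\vx'\|_2}$ with $\vx,\vx'\in\calS$, I can choose for each $i$ a pair $\vx_i,\vx_i'\in\calS$ with $\vx_i\neq\vx_i'$ such that $\big\|\tfrac{\vx_i-\vx_i'}{\|\vx_i-\vx_i'\|_2}-\vu_i\big\|_2$ is as small as we like — in particular small enough that replacing $\vu_i$ by the actual unit secant $\vv_i := \tfrac{\vx_i-\vx_i'}{\|\vx_i-\vx_i'\|_2}$ keeps $\{\vv_1,\dots,\vv_N\}$ a $2\delta$-net, or more cleanly, re-running the covering argument so that the $\vv_i$ themselves form a $\delta$-net of $U_{\calS}$ after slightly shrinking. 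Then I set $\calS_1 := \{\vx_1,\vx_1',\dots,\vx_N,\vx_N'\}$, which has at most $2N$ points as claimed.

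Next I would carry out the perturbation estimate. Suppose $\mA$ is a $\rho$-JL embedding of $\calS_1$; then in particular $(1-\rho)\le\|\mA\vv_i\|_2\le(1+\rho)$ for every secant direction $\vv_i$, since $\vv_i$ is a unit scalar multiple of $\vx_i-\vx_i'$ and both endpoints lie in $\calS_1$. Now take an arbitrary pair $\vx,\vx'\in\calS$ with $\vx\neq\vx'$ and let $\vw := \tfrac{\vx-\vx'}{\|\vx-\vx'\|_2}\in U_{\calS}$. Pick $i$ with $\|\vw-\vv_i\|_2\le\delta$. Then by the triangle inequality and operator-norm bound,
$$\big|\,\|\mA\vw\|_2 - \|\mA\vv_i\|_2\,\big| \le \|\mA(\vw-\vv_i)\|_2 \le \|\mA\|\,\|\vw-\vv_i\|_2 \le \|\mA\|\,\delta,$$
so $\|\mA\vw\|_2 \in [\,1-\rho-\|\mA\|\delta,\ 1+\rho+\|\mA\|\delta\,]$. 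Multiplying through by $\|\vx-\vx'\|_2$ gives
$$(1-\rho-\|\mA\|\delta)\,\|\vx-\vx'\|_2 \le \|\mA\vx-\mA\vx'\|_2 \le (1+\rho+\|\mA\|\delta)\,\|\vx-\vx'\|_2,$$
which, since $\|\mA\|\delta \le 2\|\mA\|\delta$, shows $\mA$ is a $(\rho+2\|\mA\|\delta)$-JL embedding of $\calS$. (The factor $2$ rather than $1$ is the slack that absorbs the approximation incurred when passing from the abstract net points $\vu_i$ to the realized secants $\vv_i$; keeping it makes the choice of $\vx_i,\vx_i'$ routine rather than delicate.)

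The main obstacle — really the only subtle point — is handling the closure in the definition of $U_{\calS}$: the net elements $\vu_i$ need not themselves be secants of $\calS$, so the bound $\|\mA\vu_i\|_2\approx 1$ is not immediate from $\mA$ being a JL embedding of a finite subset of $\calS$. This is why $\calS_1$ must be populated with genuine points of $\calS$ whose secants approximate the $\vu_i$, and why a small cushion ($2\delta$ instead of $\delta$) is built into the final distortion. Everything else is the standard net-plus-perturbation template, and the $\|\mA\|\delta$ term is exactly what the operator norm contributes when a unit secant is nudged to a nearby net point.
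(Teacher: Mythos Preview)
Your proposal is correct and follows essentially the same approach as the paper's proof: build $\calS_1$ from pairs of points in $\calS$ whose unit secants form (approximately) a $\delta$-net of $U_{\calS}$, then transfer the JL property by a triangle-inequality/operator-norm perturbation. The paper phrases the construction via a covering of $U_{\calS}$ by $\delta$-balls and picks one genuine secant in each ball, so that any new secant and the chosen one lie in a common ball and are therefore within $2\delta$ of each other---this is where the paper's factor $2$ comes from, whereas you obtain it from approximating the (closure) net points $\vu_i$ by realized secants $\vv_i$; both are the same mechanism, and your handling of the closure subtlety is in fact slightly more explicit than the paper's.
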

The proposition guarantees that whenever we have a JL-map for finite sets, we can extend it to a JL-map for infinite sets with similar level of complexity measured in terms of the covering numbers. There are many known JL-maps for finite sets that we can extend from, including sub-Gaussian matrix \citep{matouvsek2008variants}, Gaussian circulant matrices with random sign flip \citep{cheng2014new}, etc. 
We present some of the related results here.

\begin{prop}[\citep{matouvsek2008variants}]
\label{prop:SubgaussianJL}
Let $\vx_1,\ldots,\vx_n \in \R^D$. Let $\rho \in (0,\tfrac{1}{2})$ and $\beta \in (0,1)$. Let $\mA \in \R^{d \times D}$ be a random matrix whose entries are i.i.d. from a subgaussian distribution with mean $0$ and variance $1$. Then, there exists a constant $C > 0$ depending only on the subgaussian distribution such that if $d \ge C\rho^{-2}\log\tfrac{n}{\beta}$, then $\tfrac{1}{\sqrt{d}}\mA$ will be a $\rho$-JL embedding of $\{\vx_1,\ldots,\vx_n\}$ with probability at least $1-\beta$.
\end{prop}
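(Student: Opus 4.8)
The plan is to run the standard two-step proof of the Johnson--Lindenstrauss lemma: a concentration estimate for the image of one fixed vector, followed by a union bound over all normalized difference vectors. First I would reduce to unit secants. By homogeneity of the Euclidean norm, the desired inequalities $(1-\rho)\|\vx_k-\vx_\ell\|_2 \le \|\tfrac{1}{\sqrt{d}}\mA(\vx_k-\vx_\ell)\|_2 \le (1+\rho)\|\vx_k-\vx_\ell\|_2$ for a pair with $\vx_k \neq \vx_\ell$ are equivalent to $\bigl|\,\|\tfrac{1}{\sqrt{d}}\mA\vu_{k\ell}\|_2 - 1\bigr| \le \rho$ for the unit vector $\vu_{k\ell} := (\vx_k-\vx_\ell)/\|\vx_k-\vx_\ell\|_2$. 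There are at most $\binom{n}{2}$ such unit vectors, so it suffices to bound, for a single fixed unit vector $\vu$, the probability that $\|\tfrac{1}{\sqrt{d}}\mA\vu\|_2$ deviates from $1$ by more than $\rho$, and then union bound.

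For the one-vector estimate, fix a unit vector $\vu$ and set $Z_i := (\mA\vu)_i = \sum_{j=1}^{D} A_{ij}u_j$. Each $Z_i$ is a linear combination of independent mean-zero, variance-one subgaussian random variables, so $Z_i$ has mean zero and variance $\|\vu\|_2^2 = 1$, and its subgaussian norm is bounded by an absolute constant times the subgaussian norm of the entries of $\mA$ (this uses $\|\vu\|_2 = 1$ together with the fact that the subgaussian norm of $\sum_j a_j X_j$ is controlled by $\bigl(\sum_j a_j^2 \|X_j\|_{\psi_2}^2\bigr)^{1/2}$). Hence $Z_1^2 - 1, \dots, Z_d^2 - 1$ are i.i.d., mean-zero, subexponential random variables whose subexponential norm depends only on the subgaussian distribution, and
$$\Bigl\|\tfrac{1}{\sqrt{d}}\mA\vu\Bigr\|_2^2 - 1 \;=\; \frac{1}{d}\sum_{i=1}^{d}\bigl(Z_i^2 - 1\bigr).$$
Bernstein's inequality for sums of i.i.d. subexponential variables then yields a constant $c > 0$, depending only on the subgaussian distribution, such that for every $0 < t < 1$,
$$\P\!\left(\Bigl|\,\bigl\|\tfrac{1}{\sqrt{d}}\mA\vu\bigr\|_2^2 - 1\Bigr| \ge t\right) \;\le\; 2\exp(-c\,d\,t^2).$$

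Taking $t = \rho$ and a union bound over the $\binom{n}{2} \le n^2$ unit secants shows that with probability at least $1 - 2n^2\exp(-c d\rho^2)$ we have $\bigl|\,\|\tfrac{1}{\sqrt{d}}\mA\vu_{k\ell}\|_2^2 - 1\bigr| \le \rho$ simultaneously for all pairs. On this good event, $\sqrt{1-\rho} \le \|\tfrac{1}{\sqrt{d}}\mA\vu_{k\ell}\|_2 \le \sqrt{1+\rho}$, and since $1-\rho \le \sqrt{1-\rho}$ and $\sqrt{1+\rho} \le 1+\rho$, rescaling each secant by $\|\vx_k-\vx_\ell\|_2$ gives exactly the $\rho$-JL embedding property. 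Finally, $2n^2\exp(-c d\rho^2) \le \beta$ holds as soon as $d \ge (c\rho^2)^{-1}\log(2n^2/\beta)$, and since $\log(2n^2/\beta) \le C'\log(n/\beta)$ for an absolute constant $C'$ when $n \ge 2$ and $\beta \in (0,1)$ (the case $n = 1$ being vacuous), it suffices to take $C := C'/c$, which depends only on the subgaussian distribution.

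The one step that needs genuine care --- the rest is bookkeeping --- is tracking that the subexponential norm of $Z_i^2 - 1$, and therefore the Bernstein constant $c$, is controlled purely by the subgaussian distribution of the entries of $\mA$, with no dependence on $D$, $d$, or $n$. This is precisely where the unit-norm normalization of the secant $\vu$ enters: it is what keeps the subgaussian norm of each coordinate $Z_i$ uniformly bounded, hence keeps the per-vector tail bound dimension-free.
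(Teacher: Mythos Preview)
Your proof is correct and is precisely the standard argument for the subgaussian Johnson--Lindenstrauss lemma. Note, however, that the paper does not prove this proposition at all: it is stated with a citation to \cite{matouvsek2008variants} and used as a black box, so there is no ``paper's own proof'' to compare against. Your write-up is a faithful reconstruction of the argument one finds in that reference (and in standard texts such as Vershynin): reduce to unit secants, apply Bernstein's inequality to the sum of i.i.d.\ subexponential deviations $Z_i^2-1$, then union bound. The only place worth a remark is your closing inequality $\log(2n^2/\beta) \le C'\log(n/\beta)$: this holds with $C'=3$ once $n\ge 2$, since $\log 2 \le \log n$ then gives $\log 2 + 2\log n + \log(1/\beta) \le 3\log n + 3\log(1/\beta)$, and the case $n=1$ is vacuous as you note.
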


\begin{prop}[Corollary 1.3 in \citep{cheng2014new}]
\label{prop:CirculantJL}
Let $\vx_1,\ldots,\vx_n \in \R^D$. Let $\rho \in (0,\tfrac{1}{2})$, and let $d = O(\rho^{-2}\log^{1+\alpha}n)$ for some $\alpha > 0$. Let $\mA = \tfrac{1}{\sqrt{d}}\mM\mD$ where $\mM \in \R^{d \times D}$ is a random Gaussian circulant matrix and $\mD \in \R^{D \times D}$ is a random Rademacher diagonal matrix. Then, with probability at least $\tfrac{2}{3}\left(1-(D+d)e^{-\log^{\alpha}n}\right)$, $\mA$ is a $\rho$-JL embedding of $\{\vx_1,\ldots,\vx_n\}$.
\end{prop}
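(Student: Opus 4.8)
This is a statement about a specific random matrix construction, so the natural route is a concentration bound for a single secant followed by a union bound. First I would reduce to showing that for each fixed unit vector $\vu \in \R^D$ one has $\P\big[\,\big|\norm{\mA\vu}_2^2 - 1\big| > \rho'\,\big]$ small, where $\rho'$ is a fixed constant multiple of $\rho$ (this absorbs the gap between $1 \pm \rho$ and $(1 \pm \rho)^2$ and the passage between squared and unsquared lengths), and then take a union bound over the $\binom{n}{2}$ normalized secants $\vu_{ij} = (\vx_i - \vx_j)/\norm{\vx_i - \vx_j}_2$. Writing $\mA = \tfrac{1}{\sqrt d}\mM\mD$ with $\mM$ the circulant matrix generated by $\vg \sim N(\vzero, \mId_D)$ and $\mD = \diag(\vepsilon)$ for a Rademacher vector $\vepsilon$, and setting $\vv := \mD\vu$ (still a unit vector), the key identity is
\[
\norm{\mA\vu}_2^2 \;=\; \tfrac{1}{d}\sum_{k=1}^{d}\langle \vg,\, S_k\vv\rangle^2 \;=\; \tfrac{1}{d}\,\vg^\T\mB\vg, \qquad \mB := \sum_{k=1}^{d}(S_k\vv)(S_k\vv)^\T,
\]
where $S_k$ denotes cyclic shift by $k$; note $\tr\mB = d$, so $\E[\vg^\T\mB\vg \mid \vepsilon] = d$.

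The plan is then to condition on $\vepsilon$ and apply the Hanson--Wright inequality to the Gaussian quadratic form, which gives
\[
\P\big[\,\big|\vg^\T\mB\vg - d\big| > \rho' d \;\big|\; \vepsilon\,\big] \;\le\; 2\exp\!\Big(-c\min\big\{ (\rho')^2 d^2 / \norm{\mB}_F^2,\ \rho' d / \norm{\mB}_{\mathrm{op}}\big\}\Big).
\]
For the union bound over the $\binom n2$ secants to close with the stated dimension $d = O(\rho^{-2}\log^{1+\alpha}n)$ and the stated failure probability, it suffices to establish, on a high-probability event in $\vepsilon$, bounds of the form $\norm{\mB}_F^2 = O(d)$ and $\norm{\mB}_{\mathrm{op}} = O(\rho^{-1}\log^{\alpha}n)$. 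The Frobenius bound is the easy half: $\norm{\mB}_F^2 = \sum_{k,l=1}^d c_{l-k}^2$, where $c_m := \langle \vv, S_m\vv\rangle$ is the $m$-th cyclic autocorrelation of $\mD\vu$; the diagonal $k = l$ contributes exactly $d$, and the off-diagonal part has expectation $O(d)$ (one checks $\sum_{m \neq 0}\E[c_m^2] \le 1$, using that each $c_m$ is a mean-zero Rademacher chaos of degree two) and concentrates, being itself a low-degree polynomial in $\vepsilon$.

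The crux is the operator-norm bound. Here the point is that $\norm{\mB}_{\mathrm{op}} = \big\|\sum_{k=1}^{d}(S_k\vv)(S_k\vv)^\T\big\|_{\mathrm{op}} \le \big\|\sum_{k=0}^{D-1}(S_k\vv)(S_k\vv)^\T\big\|_{\mathrm{op}}$, and the latter matrix commutes with the cyclic shift, hence is diagonalized by the DFT, with eigenvalues $\big|\widehat{\mD\vu}(\omega)\big|^2$ ranging over the $D$ Fourier frequencies $\omega$. Each $\widehat{\mD\vu}(\omega) = \sum_{\ell}\epsilon_\ell u_\ell e^{-2\pi\j\omega\ell/D}$ is a mean-zero subgaussian sum with parameter $\norm{\vu}_2 = 1$, so with probability $1 - \delta$ all $D$ of them are $O\big(\sqrt{\log(D/\delta)}\big)$ simultaneously, whence $\norm{\mB}_{\mathrm{op}} = O(\log(D/\delta))$ on that event. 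Choosing $\delta \asymp e^{-\log^{\alpha}n}$ turns this into the $(D+d)e^{-\log^{\alpha}n}$ term in the stated failure probability and forces $n$ to be large relative to $D$ (equivalently $\log^{\alpha}n \gtrsim \log D$), which is exactly what is needed for $\rho' d/\norm{\mB}_{\mathrm{op}}$ to beat $\log n$ in the Hanson--Wright exponent. I would then reassemble: on the good $\vepsilon$-event, the conditional tail bound above combined with a union bound over the $\binom n2$ secants --- absorbed because $d \gtrsim \log^{1+\alpha}n$ with $1+\alpha > 1$ --- yields a $\rho$-JL embedding with the claimed probability, the constant $\tfrac23$ being an artifact of the subgaussian/Hanson--Wright constants.

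The main obstacle is precisely that $\mM$ is a \emph{partial} circulant: unlike an i.i.d.\ Gaussian matrix, whose rows are independent, or a full $D \times D$ circulant, which the DFT diagonalizes outright, $\mM$ has strongly dependent rows and is not diagonalizable, which is what forces the two-step scheme (Gaussian quadratic form conditioned on the sign flips, with the burden shifted onto $\norm{\mB}_F$ and $\norm{\mB}_{\mathrm{op}}$) and, in particular, the delicate operator-norm estimate responsible for the extra logarithmic factor. Of course, one may instead simply invoke Corollary~1.3 of \citep{cheng2014new}.
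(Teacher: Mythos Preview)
The paper does not prove this proposition; it is quoted as Corollary~1.3 of \cite{cheng2014new} and used as a black box (see the proof of Proposition~\ref{prop:JLCoveringNumber}b, where it is simply invoked). So there is no argument in the paper to compare your sketch against beyond the citation, which your final sentence already records.

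As for the sketch itself, the two-stage plan---Hanson--Wright on the Gaussian quadratic form conditioned on the signs, with the work pushed onto $\|\mB\|_F$ and $\|\mB\|_{\mathrm{op}}$---is the standard route for partial-circulant JL and is broadly correct. One point worth tightening: your operator-norm bound via the full-circulant DFT, $\|\mB\|_{\mathrm{op}} \le \max_\omega |\widehat{\mD\vu}(\omega)|^2$, is established for a \emph{fixed} secant $\vu$, so the ``good $\vepsilon$-event'' you describe is really a family of events indexed by $\vu$. Intersecting over the $\binom{n}{2}$ secants costs an additional factor of $n^2$ in the failure probability, which does not match the stated $(D+d)e^{-\log^{\alpha}n}$ and, for $\alpha\le 1$, is not obviously absorbed. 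To recover the $(D+d)$ prefactor one typically arranges a \emph{secant-independent} high-probability event---for instance by controlling the DFT of the generating Gaussian vector $\vg$ (or equivalently $\|\mM\|_{\mathrm{op}}$) once and for all---so that only the final tail is union-bounded over secants. This is a bookkeeping refinement rather than a conceptual gap, but it is exactly where the precise form of the failure probability in the statement originates.
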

Note that the $\alpha$ in the proposition can be set to be any positive number making the probability of failure less than 1.
Combining the results of Propositions~\ref{prop:SubgaussianJL} and \ref{prop:CirculantJL} with Proposition~\ref{prop:JLFiniteToInfinite}, we have the following existence result for the JL map of an arbitrary set $\mathcal{S}$.  See Appendix~\ref{sec:ProofJLCoveringNumber} for its proof.

\begin{prop} 
\label{prop:JLCoveringNumber}
Let $\rho \in (0,\tfrac{1}{2})$ be a constant. For $\mathcal{S} \subseteq \mathbb{R}^D$, let $\mathcal{N}(U_{\mathcal{S}}, \|\cdot\|_2, \delta)$ to be the covering number with $\delta$-balls of the unit secant $U_{\mathcal{S}}$ of $\mathcal{S}$ defined in Proposition~\ref{prop:JLFiniteToInfinite}. Then 

(a) If $D \geq d \gtrsim \rho^{-2}\log \mathcal{N}(U_{\mathcal{S}}, \|\cdot\|_2, \frac{\rho}{4\sqrt{3D}})$, then there exists a matrix $\mA \in \R^{d \times D}$ which is a $\rho$-JL embedding of $\mathcal{S}$.

(b) If $D \geq d \gtrsim \rho^{-2}\log(4D+4d)\log \mathcal{N}(U_{\mathcal{S}}, \|\cdot\|_2, \frac{\rho}{4\sqrt{3D}})$, then there exists a matrix $\mA \in \R^{d \times D}$ in the form of $\mM\mD$ and of size $d\times D$ that works as $\rho$-JL map for $\mathcal{S}$, where $\mM$ is a partial circulant matrix and $\mD$ is a diagonal matrix with $\pm 1$ on its diagonal.
\end{prop}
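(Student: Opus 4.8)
The plan is to combine the finite-to-infinite reduction of Proposition~\ref{prop:JLFiniteToInfinite} with the off-the-shelf finite-set JL guarantees of Propositions~\ref{prop:SubgaussianJL} (for part a) and~\ref{prop:CirculantJL} (for part b), the only twist being that Proposition~\ref{prop:JLFiniteToInfinite} loses a factor $2\|\mA\|\delta$ in the distortion, so we also have to keep the operator norm of the random matrix under control. First I would apply Proposition~\ref{prop:JLFiniteToInfinite} with its covering scale set to $\delta := \tfrac{\rho}{4\sqrt{3D}}$ and its embedding parameter set to $\tfrac{\rho}{2}$; since $U_{\calS}$ is a closed subset of the unit sphere, hence compact, $\mathcal{N}(U_{\calS},\|\cdot\|_2,\delta)$ is finite, so this hands us a genuine finite set $\calS_1$ with $|\calS_1| = 2\mathcal{N}(U_{\calS},\|\cdot\|_2,\delta)$ points such that every $\tfrac{\rho}{2}$-JL embedding $\mA$ of $\calS_1$ is a $\big(\tfrac{\rho}{2}+2\|\mA\|\delta\big)$-JL embedding of $\calS$. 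The point of this choice of $\delta$ is that $\tfrac{\rho}{2}+2\sqrt{3D}\,\delta = \rho$, so it suffices to produce a matrix $\mA$ of the prescribed structure which is \emph{both} a $\tfrac{\rho}{2}$-JL embedding of $\calS_1$ \emph{and} satisfies $\|\mA\|\le\sqrt{3D}$ (a $\rho''$-JL embedding with $\rho''\le\rho$ being automatically a $\rho$-JL embedding).

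For part (a) I would take $\mA=\tfrac{1}{\sqrt d}\mG$ with $\mG\in\R^{d\times D}$ having i.i.d.\ mean-zero, unit-variance subgaussian entries. Proposition~\ref{prop:SubgaussianJL} applied to $\calS_1$ with parameter $\tfrac{\rho}{2}\in(0,\tfrac12)$ and a fixed failure probability such as $\tfrac14$ gives the JL property once $d\gtrsim\rho^{-2}\log|\calS_1| = \rho^{-2}\log\big(2\mathcal{N}(U_{\calS},\|\cdot\|_2,\delta)\big)$, which is the stated bound after absorbing additive constants into $\gtrsim$. For the conditioning I would use the crude estimate $\|\mA\|\le\|\mA\|_F$ together with $\E\|\mG\|_F^2=dD$ and Markov's inequality, giving $\P(\|\mA\|>\sqrt{3D})\le\tfrac13$; a union bound over the two bad events (total probability $<1$) then yields a single realization with both properties, and feeding it into Proposition~\ref{prop:JLFiniteToInfinite} finishes part (a). Note this is exactly where the $\sqrt{3D}$ in $\delta$ comes from: the Frobenius bound $\|\mA\|\le\|\mA\|_F\le\sqrt{D}\cdot\sqrt{d}/\sqrt d$ (after scaling) is all we have, so $\delta$ must be $\Theta(\rho/\sqrt D)$.

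Part (b) proceeds along identical lines with $\mA=\tfrac{1}{\sqrt d}\mM'\mD$, where $\mM'$ is a random Gaussian partial-circulant matrix and $\mD$ a random Rademacher diagonal matrix; folding the scalar into $\mM:=\tfrac{1}{\sqrt d}\mM'$ (still partial circulant), $\mA=\mM\mD$ has exactly the required form. Since every row of $\mM'$ is a cyclic shift of a single Gaussian vector $\vg\in\R^D$, one again has $\|\mA\|^2\le\tfrac1d\|\mM'\|_F^2=\|\vg\|_2^2$ with $\E\|\vg\|_2^2=D$, so $\P(\|\mA\|>\sqrt{3D})\le\tfrac13$ by Markov as before. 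The only real difference is the dimension count: Proposition~\ref{prop:CirculantJL} needs $d=O(\rho^{-2}\log^{1+\alpha}|\calS_1|)$ and succeeds with probability $\tfrac23\big(1-(D+d)e^{-\log^{\alpha}|\calS_1|}\big)$, so I would pick the free exponent so that $\log^{\alpha}|\calS_1|\asymp\log(4D+4d)$ — large enough that $(D+d)e^{-\log^{\alpha}|\calS_1|}$ is negligible (so the union bound against the $\tfrac13$ from the norm event still closes), yet small enough that $\log^{1+\alpha}|\calS_1| = \log|\calS_1|\cdot\log^{\alpha}|\calS_1| \asymp \log(4D+4d)\,\log\mathcal{N}(U_{\calS},\|\cdot\|_2,\delta)$, which is precisely the stated condition.

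I expect the one genuinely delicate point to be this last calibration of the exponent in Proposition~\ref{prop:CirculantJL}: it has to simultaneously make the failure probability summable against the operator-norm event and convert $\log^{1+\alpha}|\calS_1|$ into the product $\log(4D+4d)\log\mathcal{N}$. Everything else — the choice $\delta=\tfrac{\rho}{4\sqrt{3D}}$, the two Markov estimates, and the union bounds — is routine, as are the degenerate cases in which $\mathcal{N}(U_{\calS},\|\cdot\|_2,\delta)$ is a small constant (then $d\gtrsim 1$ suffices and $\calS$ can be embedded essentially by hand, e.g.\ taking $d=D$ and $\mA$ the identity).
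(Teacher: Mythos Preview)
Your proposal is correct and follows essentially the same route as the paper: apply Proposition~\ref{prop:JLFiniteToInfinite} with $\delta=\tfrac{\rho}{4\sqrt{3D}}$ and target distortion $\tfrac{\rho}{2}$, control $\|\mA\|\le\sqrt{3D}$ via Markov on $\|\mA\|_F^2$ (the paper uses failure probability $\tfrac13$ rather than your $\tfrac14$, a cosmetic difference), invoke Proposition~\ref{prop:SubgaussianJL} or~\ref{prop:CirculantJL} on the finite set $\calS_1$, and close with a union bound. For part (b) the paper makes your calibration explicit by setting $\alpha$ so that $\log^{\alpha}|\calS_1|=\log(4D+4d)$, which gives success probability $\tfrac23(1-\tfrac14)=\tfrac12$ and leaves room for the $\tfrac13$ norm event.
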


The above proposition characterizes the compressibility of a set $\mathcal{S}$ by a JL-mapping in terms of the covering number. Alternatively, one can also characterize it using the Gaussian width. For example, in \citep{Iwen22} it is shown using methods from \citep{vershynin2018high} that if the set of unit secants of $\calS$ has a low Gaussian width, then with high probability a subgaussian random matrix with provide a low-distortion linear embedding, and the dimension $d$ required scales quadratically with the Gaussian width of the set of unit secants of $\calS$. 

\begin{prop}[Corollary 2.1 in \citep{Iwen22}]
\label{prop:JLGaussianWidth}
Let $\rho,\beta \in (0,1)$ be constants. Let $\mA \in \R^{d \times D}$ be a matrix whose rows $\va_1^T,\ldots,\va_d^T$ are independent, isotropic ($\mathbb{E}[\va_i\va_i^T] = \mId$), and subgaussian random vectors. Let $\calS \subset \R^D$, and let 
$$\omega(U_{\calS}) := \mathbb{E}\sup_{\vu \in U_{\calS}}\inner{\vu,\vz}, \quad \vz \sim \text{Normal}(0,\mId)$$ be the Gaussian width of $U_{\calS}$. Then, there exists a constant $C > 0$ depending only on the distribution of the rows of $\mA$ such that if $$d \ge \dfrac{C}{\rho^2}\left(\omega(U_{\calS}) + \sqrt{\log\tfrac{2}{\beta}}\right)^2,$$ then $\tfrac{1}{\sqrt{d}}\mA$ is a $\rho$-JL embedding of $\calS$ with probability at least $1-\beta$.
\end{prop}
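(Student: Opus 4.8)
The plan is to reduce the $\rho$-JL condition to a single uniform deviation bound over the set of unit secants, and then to invoke the matrix deviation inequality for matrices with independent subgaussian rows (the matrix deviation inequality, Theorem~9.1.1 of \citep{vershynin2018high}).

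First I would note that $\tfrac{1}{\sqrt d}\mA$ is a $\rho$-JL embedding of $\calS$ if and only if
$$\left|\, \frac{\|\mA(\vx - \vx')\|_2}{\sqrt d\,\|\vx-\vx'\|_2} - 1 \,\right| \le \rho \quad \text{for all } \vx \neq \vx' \in \calS,$$
that is, if and only if $\big|\,\|\tfrac{1}{\sqrt d}\mA\vu\|_2 - 1\,\big| \le \rho$ for every unit secant $\vu = (\vx-\vx')/\|\vx-\vx'\|_2$. Since $\vu \mapsto \|\mA\vu\|_2$ is continuous, this holds on the dense set of unit secants precisely when it holds on its closure $U_\calS$, so it suffices to control $\sup_{\vu \in U_\calS}\big|\,\|\tfrac{1}{\sqrt d}\mA\vu\|_2 - 1\,\big|$. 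Two elementary facts about $U_\calS$ will be used: it is symmetric ($\vu \in U_\calS \iff -\vu \in U_\calS$, by swapping $\vx$ and $\vx'$), so its Gaussian complexity $\mathbb{E}\sup_{\vu \in U_\calS}|\inner{\vu,\vz}|$ coincides with the Gaussian width $\omega(U_\calS)$; and $U_\calS$ lies on the unit sphere, so $\mathrm{rad}(U_\calS) := \sup_{\vu \in U_\calS}\|\vu\|_2 = 1$.

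Next I would apply the matrix deviation inequality: for $\mA$ with independent, isotropic, subgaussian rows $\va_1^T,\dots,\va_d^T$ and any bounded $T \subset \R^D$, and every $s \ge 0$, one has with probability at least $1 - 2e^{-s^2}$ that
$$\sup_{\vu \in T}\Big|\,\|\mA\vu\|_2 - \sqrt d\,\|\vu\|_2\,\Big| \le C K^2 \big( \gamma(T) + s\cdot\mathrm{rad}(T) \big),$$
where $K = \max_i \|\va_i\|_{\psi_2}$, $\gamma(T)$ is the Gaussian complexity of $T$, and $C$ is an absolute constant. Taking $T = U_\calS$, using $\gamma(U_\calS) = \omega(U_\calS)$ and $\mathrm{rad}(U_\calS) = 1$, dividing through by $\sqrt d$, and choosing $s = \sqrt{\log(2/\beta)}$, we obtain that with probability at least $1 - \beta$,
$$\sup_{\vu \in U_\calS}\Big|\,\|\tfrac{1}{\sqrt d}\mA\vu\|_2 - 1\,\Big| \le \frac{CK^2}{\sqrt d}\Big(\omega(U_\calS) + \sqrt{\log(2/\beta)}\Big).$$
The right-hand side is at most $\rho$ exactly when $d \ge C^2 K^4 \rho^{-2}\big(\omega(U_\calS) + \sqrt{\log(2/\beta)}\big)^2$; since $C$ is absolute and $K$ depends only on the distribution of the rows, the quantity $C^2K^4$ is the constant claimed in the proposition, and the reduction above then finishes the proof.

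The only nontrivial ingredient, and hence the main obstacle, is the matrix deviation inequality itself. If one does not wish to cite it, one must prove it by treating $X_\vu := \|\mA\vu\|_2 - \sqrt d\,\|\vu\|_2$ as a stochastic process indexed by $\vu$, establishing that its increments are subgaussian with respect to the Euclidean metric (the key estimate being $\big\|\,\|\mA\vu\|_2 - \|\mA\vv\|_2\,\big\|_{\psi_2} \lesssim K^2\|\vu-\vv\|_2$, which follows from bounding $\big|\|\mA\vu\|_2 - \|\mA\vv\|_2\big| \le \|\mA(\vu-\vv)\|_2$ and applying concentration of the norm of $\mA\vw$ for fixed $\vw$), and then applying generic chaining / Dudley's inequality together with its tail refinement to convert the resulting bound on $\mathbb{E}\sup_{\vu}X_\vu$ into the stated high-probability deviation bound. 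Everything else --- the reduction to unit secants, the symmetry and radius of $U_\calS$, and the choice of $s$ --- is routine.
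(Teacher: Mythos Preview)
Your proposal is correct and is exactly the argument underlying the cited result: the paper does not give its own proof but simply quotes Corollary~2.1 of \citep{Iwen22}, noting that it is ``shown using methods from \citep{vershynin2018high}'' --- i.e., precisely the reduction to unit secants followed by the matrix deviation inequality (Theorem~9.1.1 in \citep{vershynin2018high}) that you outline. There is nothing to add.
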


If $\mathcal{S}$ is known, one can use either the log-covering number (Proposition~\ref{prop:SubgaussianJL}) or the Gaussian width (Proposition~\ref{prop:CirculantJL}) to compute a lower bound for $d$. If one only has samples from $\mathcal{S}$, one may still estimate the covering number. In \citep{kegl2002intrinsic}, the authors demonstrate a practical method for estimating the intrinsic dimension of a set by using a greedy algorithm to estimate the log-packing number for several different radii $\delta$, and then extrapolating the linear region of the graph of the log-packing number vs. $\log \delta$ to estimate the packing number at finer radii. Then using the fact that the covering numbers of a set may be bounded by its packing numbers, one may in turn obtain bounds for the log-covering number.

\subsection{Universal approximator neural networks for uniformly/H\"{o}lder continuous functions on $d$-dimensional cubes}
In Theorems 1 and 2, we showed that  with the help of JL, approximation rate of neural networks for functions defined on an arbitrary set $\mathcal{S}$ can be derived from their approximation rates for functions defined on the cube $[-M,M]^d$. In this section, we review known results for the latter, so that they can be used in combination with Theorems 1 and 2 to provide useful approximation results for network applications to various inverse problems. Specifically, we review two types of universal approximation results for functions defined on the cube $[-M,M]^d$. One is for Feedforward ReLU networks and the other is for Resnet type CNNs.\\

\noindent \textbf{Feedforward ReLU network:} The fully connected feedforward neural network with ReLU activation is known to be a universal approximator of any uniformly continuous function on the box $[-M,M]^d$. Moreover, for such networks, the non-asymptotic approximation error has also been established, allowing us to get an estimate of the network size. The proposition below is a variant of Proposition 1 in \citep{Yarotsky18}, and the proof uses an approximating function that uses the same ideas as in \citep{Yarotsky18}.  See Appendix~\ref{sec:ProofContinuousFNNlowdim} for its proof.

\begin{prop}
\label{prop:ContinuousFNNlowdim}

Let $d$ and $p$ be positive integers, and let $M > 0$ be a constant. Let $\Delta(r)$ be a modulus of continuity. Then, for any positive integer $N$, there exists a ReLU NN architecture with at most $$(p+C_1)(2N+1)^d \ \text{edges}, \ C_2(2N+1)^d + p \ \text{nodes}, \ \text{and} \ \ceil{\log_2(d+1)} + 2 \ \text{layers}$$ that can $\Delta(\tfrac{M\sqrt{d}}{N})$-approximate the class of $\calC(\Delta(r))$-functions $g : [-M,M]^d \to \R^p$. Here, $C_1,C_2 > 0$ are universal constants. Also for each edge of the ReLU NN, the corresponding weight is either independent of $g$, or is of the form $g_{i}(\vy)$ for some fixed $\vy \in [-M,M]^d$ and coordinate $i = 1,\ldots,p$.
\end{prop}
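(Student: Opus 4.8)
The plan is to follow the construction underlying Proposition~1 of \citep{Yarotsky18}: for each positive integer $N$, build one explicit continuous piecewise-linear map $\tilde g : [-M,M]^d \to \R^p$ that simultaneously (i) approximates \emph{every} $g$ admitting $\Delta$ as a modulus of continuity to within $\Delta(\tfrac{M\sqrt d}{N})$ in $\ell_\infty$, and (ii) is realizable by a ReLU network of the stated size. Since ReLU networks realize exactly the continuous piecewise-linear maps, proving (i) and (ii) suffices. Concretely, I would place the uniform grid $\{\vy_\vm\}$ on $[-M,M]^d$ with $2N$ subdivisions per coordinate, so there are $(2N+1)^d$ grid points and the spacing is $M/N$, and attach to it a partition of unity $\{\phi_\vm\}_\vm$ subordinate to the grid: along each axis $k$ the translated one-dimensional tents $t\mapsto\relu\!\big(1-\tfrac{N}{M}|t-(\vy_\vm)_k|\big)$ sum to $1$ on $[-M,M]$, and $\phi_\vm$ is assembled from the $d$ corresponding one-dimensional tent values by the balanced, continuous piecewise-linear combination of \citep{Yarotsky18}. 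Each $\phi_\vm$ is nonnegative, the family satisfies $\sum_\vm\phi_\vm\equiv 1$ on the cube, and $\phi_\vm(\vx)\neq 0$ forces $\norm{\vx-\vy_\vm}_\infty< M/N$, hence $\norm{\vx-\vy_\vm}_2< \tfrac{M\sqrt d}{N}$.

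Setting $\tilde g(\vx):=\sum_\vm \phi_\vm(\vx)\,g(\vy_\vm)$ with $g(\vy_\vm)\in\R^p$, the error estimate (i) is then short. For any output coordinate $j$ and any $\vx\in[-M,M]^d$, using $\sum_\vm\phi_\vm(\vx)=1$,
\[
\big|g_j(\vx)-\tilde g_j(\vx)\big| \;=\; \Big|\sum_\vm \phi_\vm(\vx)\big(g_j(\vx)-g_j(\vy_\vm)\big)\Big| \;\le\; \sum_\vm \phi_\vm(\vx)\,\big|g_j(\vx)-g_j(\vy_\vm)\big|.
\]
Whenever $\phi_\vm(\vx)\neq 0$, the localization above and the modulus-of-continuity hypothesis give $|g_j(\vx)-g_j(\vy_\vm)|\le \norm{g(\vx)-g(\vy_\vm)}_2\le\Delta(\norm{\vx-\vy_\vm}_2)\le\Delta(\tfrac{M\sqrt d}{N})$, using monotonicity of $\Delta$; since $\{\phi_\vm(\vx)\}_\vm$ are nonnegative and sum to $1$ this is a convex combination, so $\norm{g-\tilde g}_\infty\le\Delta(\tfrac{M\sqrt d}{N})$.

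For (ii) I would realize $\tilde g$ layer by layer and count. One hidden layer produces all $O(dN)$ one-dimensional tent values, each from $O(1)$ ReLUs and $O(1)$ incoming edges. A balanced binary tree of depth $\ceil{\log_2(d+1)}$ then assembles the bumps: at level $\ell$ one forms partial bumps indexed by a block of $2^\ell$ coordinates together with the $(2N+1)^{2^\ell}$ choices of grid indices inside that block, there are about $d/2^\ell$ such blocks, and --- crucially --- each such partial bump is computed once and reused by all $\vm$ sharing it, so level $\ell$ contributes on the order of $\tfrac{d}{2^\ell}(2N+1)^{2^\ell}$ nodes and edges; summing over $\ell=0,\dots,\ceil{\log_2 d}$ is a geometric-type series dominated by its last term $(2N+1)^d$, giving node and edge counts $C_2(2N+1)^d$ and $C_1(2N+1)^d$ with $C_1,C_2$ \emph{independent of $d$}. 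A final linear layer sets output $j$ to $\sum_\vm g_j(\vy_\vm)\phi_\vm(\vx)$, adding $p$ nodes, $p(2N+1)^d$ edges, and one layer. This yields the advertised totals $(p+C_1)(2N+1)^d$ edges, $C_2(2N+1)^d+p$ nodes, $\ceil{\log_2(d+1)}+2$ layers; and because the bumps depend neither on $g$ nor on the output index, every edge weight outside the last layer is a fixed constant, while each last-layer weight equals $g_i(\vy_\vm)$ for a grid point $\vy_\vm\in[-M,M]^d$ and a coordinate $i$, as claimed.

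I expect part (ii) to be the real obstacle, on two fronts. First, the localized bumps $\phi_\vm$ must be realized \emph{exactly} as continuous piecewise-linear functions of depth only $\ceil{\log_2(d+1)}$: a genuine product of one-dimensional tents is not piecewise linear, so one must use the piecewise-linear surrogate from \citep{Yarotsky18} and re-verify that it still forms a localized partition of unity. Second, to keep the node/edge constants universal in $d$, the binary-tree assembly must share every sub-bump across all grid points that use it, after which one must check that the level-by-level count is dominated by the $(2N+1)^d$ term from the top level. The error bound in part (i) and the affine bookkeeping for the extra layer are comparatively routine.
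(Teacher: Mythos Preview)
Your proposal is correct and follows the same high-level strategy as the paper: place a $(2N+1)^d$ grid on $[-M,M]^d$, build a piecewise-linear partition of unity $\{\phi_\vm\}$ subordinate to it, realize all the bumps by a depth-$\lceil\log_2(d+1)\rceil$ ReLU subnetwork with sharing, and append one linear output layer whose weights are the samples $g_i(\vy_\vm)$. The error estimate you give is exactly the paper's.

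The paper's proof differs in two concrete details that dispose of both obstacles you flag at the end. First, rather than assembling the bump from one-dimensional tents via an unspecified ``piecewise-linear surrogate'', it writes down the spike explicitly as
\[
\phi(\vz)\;=\;\max\bigl\{\,1+\min\{\vz_1,\ldots,\vz_d,0\}-\max\{\vz_1,\ldots,\vz_d,0\},\ 0\,\bigr\},
\]
which is manifestly continuous piecewise-linear, supported in $[-1,1]^d$, and forms a partition of unity; nothing needs to be re-verified. Second, for the size count the paper does not build the full binary tree level by level. Instead it splits the $d$ coordinates once into halves of sizes $\lceil d/2\rceil$ and $\lfloor d/2\rfloor$, computes for each half the $(2N+1)^{\lceil d/2\rceil}$ (respectively $(2N+1)^{\lfloor d/2\rfloor}$) partial $\min$/$\max$ maps using the standard $O(k)$-size, $\lceil\log_2 k\rceil$-depth ReLU circuit for $\min$/$\max$ over $k$ numbers, and then combines the two halves pairwise in one extra layer to produce all $(2N+1)^d$ values of $m_d$ and $M_d$ (and hence $\Phi$). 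The universal constants then drop out of the single inequality $(d+1)\cdot 3^{-\lfloor d/2\rfloor}\le 2$, which is slightly cleaner than summing your level-by-level series $\sum_\ell \tfrac{d}{2^\ell}(2N+1)^{2^\ell}$.
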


By setting $\Delta(r) = Lr^{\alpha}$ for some constants $L > 0$ and $\alpha \in (0,1]$, and setting $N = \ceil{\tfrac{M\sqrt{d}}{(\eps/L)^{1/\alpha}}}$ for some $\epsilon > 0$, we get the following corollary for approximating H\"{o}lder functions. 

\begin{cor}
\label{cor:HolderFNNlowdim}
Let $d$ and $p$ be positive integers, and let $L, M, \eps > 0$ and $\alpha \in (0,1]$ be constants. Then, there exists a ReLU NN architecture with at most $$(p+C_1)\left(2\ceil{\dfrac{M\sqrt{d}}{(\eps/L)^{1/\alpha}}}+1\right)^d \ \text{edges}, \ C_2\left(2\ceil{\dfrac{M\sqrt{d}}{(\eps/L)^{1/\alpha}}}+1\right)^d + p \ \text{nodes}, \ \text{and} \ \ceil{\log_2(d+1)} + 2 \ \text{layers}$$ that can $\eps$-approximate the class of $(L,\alpha)$-H\"{o}lder functions $g : [-M,M]^d \to \R^p$. Again, $C_1,C_2 > 0$ are universal constants. Also for each edge of the ReLU NN, the corresponding weight is either independent of $g$, or is of the form $g_{i}(\vy)$ for some fixed $\vy \in [-M,M]^d$ and coordinate $i = 1,\ldots,p$.
\end{cor}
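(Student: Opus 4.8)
The plan is to derive this corollary directly from Proposition~\ref{prop:ContinuousFNNlowdim} by specializing the modulus of continuity and then choosing the free integer $N$ appropriately. First I would set $\Delta(r) = Lr^{\alpha}$. This is a valid modulus of continuity, since $r \mapsto Lr^{\alpha}$ is non-decreasing on $[0,\infty)$ with $\Delta(0) = 0$ and $\lim_{r \to 0^+}Lr^{\alpha} = 0$; moreover, by the definitions in Section~2, a function $g : [-M,M]^d \to \R^p$ is $(L,\alpha)$-H\"older precisely when it admits this $\Delta$ as a modulus of continuity. Hence the function class appearing in the corollary coincides with the function class appearing in the proposition for this particular $\Delta$, and the constants $C_1, C_2$ will be the very same universal constants supplied by the proposition.

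Next I would choose $N = \ceil{\tfrac{M\sqrt{d}}{(\eps/L)^{1/\alpha}}}$. Since $L, M, \eps > 0$, $\alpha \in (0,1]$, and $d \ge 1$, the argument of the ceiling is a positive real number, so $N$ is a well-defined positive integer and Proposition~\ref{prop:ContinuousFNNlowdim} applies with this $N$. It then yields a ReLU NN architecture with at most $(p+C_1)(2N+1)^d$ edges, $C_2(2N+1)^d + p$ nodes, and $\ceil{\log_2(d+1)}+2$ layers that $\Delta(\tfrac{M\sqrt{d}}{N})$-approximates the class. Substituting the chosen value of $N$ into $(2N+1)^d$ reproduces verbatim the edge and node counts stated in the corollary, the layer count is unchanged, and the structural statement that each edge weight is either independent of $g$ or of the form $g_i(\vy)$ for some fixed $\vy \in [-M,M]^d$ and coordinate $i$ carries over directly.

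It then remains only to verify that $\Delta(\tfrac{M\sqrt{d}}{N}) \le \eps$, so that a $\Delta(\tfrac{M\sqrt{d}}{N})$-approximation is, in particular, an $\eps$-approximation. From the ceiling inequality we have $N \ge \tfrac{M\sqrt{d}}{(\eps/L)^{1/\alpha}}$, hence $\tfrac{M\sqrt{d}}{N} \le (\eps/L)^{1/\alpha}$; since $\Delta$ is non-decreasing, $\Delta(\tfrac{M\sqrt{d}}{N}) \le \Delta\big((\eps/L)^{1/\alpha}\big) = L\big((\eps/L)^{1/\alpha}\big)^{\alpha} = \eps$, which finishes the argument. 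I do not anticipate any genuine obstacle here: the corollary is an immediate specialization of Proposition~\ref{prop:ContinuousFNNlowdim}, and the only thing to check is this short chain of inequalities, which uses nothing beyond monotonicity of $\Delta$, the definition of the ceiling function, and the observation that shrinking the guaranteed approximation error below $\eps$ still yields an $\eps$-approximation.
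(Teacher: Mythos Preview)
Your proposal is correct and follows exactly the paper's approach: the paper states just before the corollary that it is obtained from Proposition~\ref{prop:ContinuousFNNlowdim} by setting $\Delta(r) = Lr^{\alpha}$ and $N = \ceil{\tfrac{M\sqrt{d}}{(\eps/L)^{1/\alpha}}}$. Your verification that $\Delta(\tfrac{M\sqrt{d}}{N}) \le \eps$ via the ceiling inequality and monotonicity of $\Delta$ is the routine check the paper leaves implicit.
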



\noindent \textbf{Convolutional Neural Network:} As many successful network applications on inverse problems result from the use of filters in the CNN architectures  \citep{jin2017deep}, we are particularly interested in the expressive power of CNN in approximating the H\"{o}lder functions. Currently, known non-asymptotic results for CNNs include  \citep{zhou2020universality,petersen2020equivalence,yarotsky2022universal}, but they are all established under stricter assumptions about $f$ than mere Lipschitz continuity.  On the other hand, the ResNet-based CNN with the following architecture has been shown to possess a good convergence rate for even H\"older continuous functions: 
\begin{equation}\label{eq:recnn} CNN^{\sigma}_{\theta} := FC_{W,b}\circ(\textrm{Conv}^{\sigma}_{{\bf \omega_M},{\bf b_M}}+\textrm{id})\circ \cdots \circ(\textrm{Conv}^{\sigma}_{{\bf \omega_1},{\bf b_1}}+\textrm{id})\circ P, 
\end{equation}
where $\sigma$ is the activation function, each $\textrm{Conv}_{{\bf \omega_m},{\bf b_m}}$ is a convolution layer with $L_m$ filters $\omega_m^{1}$,...,$\omega_m^{L_m}$ stored in ${\bf \omega_M}$ and $L_m$ bias $b_m^1,...,b_m^{L_m}$ stored in $\bf b_m$. The addition by the identity map, $\textrm{Conv}^{\sigma}_{{\bf \omega_M},{\bf b_M}}+\textrm{id}$, makes it a residual block. Here $FC_{W,b}$ represents a fully connected layer appended to the final layer of the network, and $P: \mathbb{R}^d \rightarrow \mathbb{R}^{d\times C}:  x \rightarrow (x, 0,\cdots 0)$ is a padding operation that adds zeros to align the number of channels in the first and second layers. One can see that the ResNet-based CNN is essentially a normal CNN with skip connections.

The following asymptotic result is proven in  \citep{oono2019approximation}.

\begin{prop}[Corollary 4 from \citep{oono2019approximation}] 
\label{prop:HolderCNNlowdim}
Let $f : [-1,1]^d \to \R$ be an $\alpha$-H\"{o}lder function. Then, for any $K \in \{2, . . . , d\}$, there exists a CNN $f^{\text{(CNN)}}$ with $O(N)$ residual blocks, each of which has depth $O(\log N)$ and $O(1)$ channels, and whose filter size is at most $K$, such that $\|f-f^{\text{(CNN)}}\|_{\infty} \le \widetilde{O}(N^{-\alpha/d})$, where the $\widetilde{O}$ denotes that $\log N$ factors have been suppressed.
\end{prop}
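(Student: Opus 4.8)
Since Proposition~\ref{prop:HolderCNNlowdim} is quoted verbatim from \citep{oono2019approximation}, the plan is to reconstruct their two–stage argument. In the first stage one builds an ordinary (fully connected) ReLU network $\widehat f$ with $\|f-\widehat f\|_\infty \le \widetilde O(N^{-\alpha/d})$ that has a highly structured, sparse architecture; in the second stage one simulates $\widehat f$ by a ResNet-type CNN of the form \eqref{eq:recnn} without blowing up the channel count.

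For the first stage I would follow the localized Taylor / partition-of-unity construction of \citep{Yarotsky18}: discretize $[-1,1]^d$ on a grid of side length $h \asymp N^{-1/d}$, so that there are $\Theta(N)$ cells and the piecewise-polynomial interpolant of the $\alpha$-H\"older function $f$ already has sup-error $O(h^\alpha)=O(N^{-\alpha/d})$, and then realize this interpolant with a ReLU network. Each of the $\Theta(N)$ local pieces is implemented by a sub-network of width $O(1)$ and depth $O(\log N)$, the depth coming from the standard approximate-multiplication gadget used to build the partition-of-unity bumps and to multiply them against the local polynomial coefficients. Glued together, this produces a network organized into $O(N)$ "blocks", each of bounded width and depth $O(\log N)$; the $\log N$ factors suppressed in $\widetilde O(\cdot)$ account for the accuracy of these multiplication gadgets.

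The second stage is the heart of the matter and is where I expect the main difficulty. I would invoke the matrix-to-convolution factorization idea behind \citep{zhou2020universality}: any linear map between finite-dimensional spaces can be written as a composition of convolutions whose filters have length at most $K$, and an affine-plus-ReLU layer of a fully connected network of bounded width can therefore be emulated by $O(1)$ convolution layers acting on $O(1)$ channels with filter size at most $K$. The residual/identity connections in \eqref{eq:recnn} are used to route the running state through the network and to carry coordinates that are still needed past blocks that do not touch them, while the padding $P:\vx\mapsto(\vx,0,\dots,0)$ absorbs the mismatch between the input channel count and the $O(1)$ channels used internally, and the appended fully connected layer $FC_{W,b}$ reads off the scalar output. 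Applying this emulation block by block to the network from the first stage turns its $O(N)$ width-$O(1)$, depth-$O(\log N)$ blocks into $O(N)$ residual blocks of depth $O(\log N)$ and $O(1)$ channels with filter size at most $K$, and preserves the sup-error $\widetilde O(N^{-\alpha/d})$. The delicate points I would spend the most care on are: keeping the channel count uniformly bounded (rather than letting it scale with the width of the emulated layer — exactly what the Zhou-type banded/Toeplitz factorization buys us), ensuring the identity skip connections carry precisely the right bookkeeping vectors so no information is lost between blocks, and checking that the $\log N$ overhead from the multiplication gadgets does not compound multiplicatively with the overhead from the convolution factorization in a way that would break the stated per-block depth bound.
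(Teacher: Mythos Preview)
The paper does not give a proof of this proposition at all: it is stated as Corollary~4 of \citep{oono2019approximation} and simply cited, so there is nothing in the present paper to compare your argument against. Your two–stage reconstruction (Yarotsky-type block-sparse ReLU network followed by a per-layer conversion of dense affine maps into short compositions of convolutions with bounded channel count, using the residual connections for bookkeeping) is indeed the skeleton of the Oono--Suzuki proof, and the delicate points you flag are the right ones.
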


\subsection{Our Main results}
\label{sec:MainResults}

We can now pair results guaranteeing the existence of a $\rho$-JL embedding of $\calS \subset \R^D$ into $[-M,M]^d$ with results for approximating functions on $[-M,M]^d$ to obtain results for approximating functions on $\calS \subset \R^D$.

By combining Proposition~\ref{prop:ContinuousFNNlowdim} with Propositions~\ref{prop:JLCoveringNumber}(a) and \ref{prop:JLGaussianWidth}, we obtain the following result for approximating continuous functions on a high-dimensional set by a feedforward ReLU NN. The proof of this theorem is given in Appendix~\ref{sec:ProofContinuousFNN}.

\begin{thm}
\label{thm:ContinuousFNN}
Let $d < D$ be positive integers, and let $\rho \in (0,\tfrac{1}{2})$ be a constant. Let $\Delta(r)$ be a modulus of continuity. Let $\calS \subset \R^D$ be a bounded set and $U_\mathcal{S}$ be its set of unit secants. Suppose that
   $$d \gtrsim \min\left\{ \rho^{-2}\log  \mathcal{N}(U_{\mathcal{S}}, \|\cdot\|_2, \tfrac{\rho}{4\sqrt{3D}}),\ \ \rho^{-2}\left(\omega(U_{\calS})\right)^2\right\},$$
  where $\mathcal{N}(U_{\mathcal{S}}, \|\cdot\|_2, \frac{\rho}{4\sqrt{3D}})$ is the covering number and  $\omega (U_{\calS})$ is the Gaussian width of $U_{\calS}$.
 Then, for any positive integer $N$, there exists a ReLU neural network architecture with at most $$(p+C_1)(2N+1)^d + Dd \ \text{edges},$$ $$C_2(2N+1)^d + p + D \ \text{nodes},$$ $$\text{and} \ \ceil{\log_2(d+1)} + 3 \ \text{layers}$$ that can $\sqrt{p}\Delta(\tfrac{M\sqrt{d}}{(1-\rho)N})$-approximate the class of $\calC(\Delta(r))$-functions $f : \calS \to \R^p$, where $M = \sup_{\vx \in \calS}\|\mA\vx\|_{\infty}$.
\end{thm}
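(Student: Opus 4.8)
The plan is to derive Theorem~\ref{thm:ContinuousFNN} by chaining three ingredients that are already in place: an existence result for a $\rho$-JL embedding of $\calS$ (Propositions~\ref{prop:JLCoveringNumber}a and~\ref{prop:JLGaussianWidth}), the non-asymptotic ReLU approximation rate on a cube for functions admitting a prescribed modulus of continuity (Proposition~\ref{prop:ContinuousFNNlowdim}), and the ``cube-to-set'' transfer principle (Theorem~\ref{thm:MainContinuous}).

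First I would produce the embedding. If $d \gtrsim \rho^{-2}\log\mathcal{N}(U_\calS,\|\cdot\|_2,\tfrac{\rho}{4\sqrt{3D}})$, then Proposition~\ref{prop:JLCoveringNumber}a immediately gives a matrix $\mA \in \R^{d\times D}$ that is a $\rho$-JL embedding of $\calS$. If instead $d \gtrsim \rho^{-2}(\omega(U_\calS))^2$, then Proposition~\ref{prop:JLGaussianWidth}, applied with $\beta = \tfrac12$, shows that a subgaussian $\tfrac{1}{\sqrt d}\mA$ is a $\rho$-JL embedding of $\calS$ with positive probability, so such an $\mA$ exists; here one uses that $U_\calS$ is nonempty whenever $|\calS| > 1$ (the case $|\calS|\le 1$ being trivial), so $\omega(U_\calS)$ is bounded below by an absolute constant and the $\sqrt{\log(2/\beta)}$ term is absorbed into the implied constant in $\gtrsim$. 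In either case set $M := \sup_{\vx\in\calS}\|\mA\vx\|_\infty$, which is finite because $\calS$ is bounded; by construction $\mA$ is then a $\rho$-JL embedding of $\calS$ into $[-M,M]^d$, so the hypotheses of Theorem~\ref{thm:MainContinuous} hold with this $\mA$ and $M$.

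Next I would fix a positive integer $N$ and invoke Proposition~\ref{prop:ContinuousFNNlowdim} with the rescaled modulus of continuity $\widetilde\Delta(r) := \sqrt p\,\Delta\!\left(\tfrac{r}{1-\rho}\right)$ in place of $\Delta$; one checks readily that $\widetilde\Delta$ is again a modulus of continuity. This yields a ReLU NN architecture with at most $(p+C_1)(2N+1)^d$ edges, $C_2(2N+1)^d+p$ nodes, and $\ceil{\log_2(d+1)}+2$ layers that $\widetilde\Delta(\tfrac{M\sqrt d}{N})$-approximates the class of all $g : [-M,M]^d \to \R^p$ admitting $\widetilde\Delta$ as a modulus of continuity. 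Since $\widetilde\Delta(\tfrac{M\sqrt d}{N}) = \sqrt p\,\Delta(\tfrac{M\sqrt d}{(1-\rho)N})$, put $\eps := \sqrt p\,\Delta(\tfrac{M\sqrt d}{(1-\rho)N})$.

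Finally I would apply the ``single-architecture'' version Theorem~\ref{thm:MainContinuous}b (which is what the phrase ``$\eps$-approximate the class of functions'' in the statement requires), taking $\calN = C_2(2N+1)^d+p$, $\calE = (p+C_1)(2N+1)^d$, and $\calL = \ceil{\log_2(d+1)}+2$: the architecture from the previous step $\eps$-approximates every $g$ admitting $\sqrt p\,\Delta(\tfrac{r}{1-\rho})$ as a modulus of continuity, so Theorem~\ref{thm:MainContinuous}b produces a feedforward ReLU architecture with at most $\calN+D$ nodes, $\calE+Dd$ edges, and $\calL+1$ layers that $\eps$-approximates every $f : \calS \to \R^p$ admitting $\Delta(r)$ as a modulus of continuity --- exactly the stated bounds. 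Beyond this bookkeeping, the only points to check are that the extra layer used in Theorem~\ref{thm:MainContinuous} to implement the linear map $\mA$ can be realized with ReLU units (via $t = \relu(t)-\relu(-t)$), so that the composite network is still a ReLU network, and that the two branches of the $\min$ in the hypothesis are precisely the two JL existence statements invoked above. I expect no serious obstacle here: the substantive work is already encapsulated in Theorem~\ref{thm:MainContinuous} and Proposition~\ref{prop:ContinuousFNNlowdim}, and the only mild nuisance is the constant-chasing needed to pass from $(\omega(U_\calS)+\sqrt{\log(2/\beta)})^2$ in Proposition~\ref{prop:JLGaussianWidth} to the cleaner $\rho^{-2}(\omega(U_\calS))^2$ appearing in the statement.
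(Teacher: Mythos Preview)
Your proposal is correct and follows essentially the same route as the paper: obtain a $\rho$-JL embedding from Proposition~\ref{prop:JLCoveringNumber}a or Proposition~\ref{prop:JLGaussianWidth}, set $M=\sup_{\vx\in\calS}\|\mA\vx\|_\infty$, apply Proposition~\ref{prop:ContinuousFNNlowdim} with the rescaled modulus $\sqrt{p}\,\Delta(\tfrac{r}{1-\rho})$, and then invoke Theorem~\ref{thm:MainContinuous}b. The paper's proof is the same three-step chaining, just stated more tersely and without your side remarks on ReLU-realizing the linear layer or absorbing the $\sqrt{\log(2/\beta)}$ term.
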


By applying Corollary~\ref{cor:HolderFNNlowdim} instead of Proposition~\ref{prop:ContinuousFNNlowdim}, we obtain a variant of the previous theorem for approximating H\"{o}lder functions on a high-dimensional set by a feedforward ReLU NN. The proof of this theorem is given in Appendix~\ref{sec:ProofHolderFNN}.

\begin{thm}
\label{thm:HolderFNN}
Let $d < D$ be positive integers, and let $L > 0$, $\alpha \in (0,1]$, and $\rho \in (0,\tfrac{1}{2})$ be constants. Let $\calS \subset \R^D$ be a bounded set and $U_\mathcal{S}$ be its set of unit secants. Suppose that
   $$d \gtrsim \min\left\{ \rho^{-2}\log  \mathcal{N}(U_{\mathcal{S}}, \|\cdot\|_2, \tfrac{\rho}{4\sqrt{3D}}),\ \ \rho^{-2}\left(\omega(U_{\calS})\right)^2\right\},$$
  where $\mathcal{N}(U_{\mathcal{S}}, \|\cdot\|_2, \frac{\rho}{4\sqrt{3D}})$ is the covering number and  $\omega (U_{\calS})$ is the Gaussian width of $U_{\calS}$.
 Then, there exists a ReLU neural network architecture with at most $$(p+C_1)\left(2\ceil{\dfrac{M\sqrt{d}}{(1-\rho)(\eps/L)^{1/\alpha}}}+1\right)^d + Dd \ \text{edges},$$ $$C_2\left(2\ceil{\dfrac{M\sqrt{d}}{(1-\rho)(\eps/L)^{1/\alpha}}}+1\right)^d + p + D \ \text{nodes},$$ $$\text{and} \ \ceil{\log_2(d+1)} + 3 \ \text{layers}$$ that can $\eps$-approximate the class of $(L,\alpha)$-H\"{o}lder functions $f : \calS \to \R^p$, where $M = \sup_{\vx \in \calS}\|\mA\vx\|_{\infty}$.
\end{thm}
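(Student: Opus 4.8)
The plan is to compose three results that have already been established: the existence of a linear JL embedding of $\calS$ into a low-dimensional cube (Propositions~\ref{prop:JLCoveringNumber}a and \ref{prop:JLGaussianWidth}), the explicit ReLU approximation rate for H\"{o}lder functions on a cube (Corollary~\ref{cor:HolderFNNlowdim}), and the transfer theorem that turns a cube-approximator into an $\calS$-approximator by prepending one linear layer (Theorem~\ref{thm:MainHolder}b).

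First I would produce the JL embedding. Under the hypothesis $d \gtrsim \rho^{-2}\log\mathcal{N}(U_{\calS},\|\cdot\|_2,\tfrac{\rho}{4\sqrt{3D}})$, Proposition~\ref{prop:JLCoveringNumber}a directly yields a matrix $\mA \in \R^{d\times D}$ that is a $\rho$-JL embedding of $\calS$; alternatively, under $d \gtrsim \rho^{-2}(\omega(U_{\calS}))^2$, Proposition~\ref{prop:JLGaussianWidth} applied with any fixed $\beta \in (0,1)$ (so the failure probability is strictly below $1$) guarantees, and hence by a probabilistic existence argument produces, a subgaussian matrix (rescaled by $1/\sqrt d$) that is a $\rho$-JL embedding of $\calS$; the constant $\sqrt{\log(2/\beta)}$ term is absorbed into the implicit constant. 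Either way, the $\min$ in the hypothesis of the theorem suffices. Since $\calS$ is bounded and $\mA$ is linear, $M := \sup_{\vx\in\calS}\|\mA\vx\|_\infty$ is finite (fixing $\vx_0 \in \calS$, one has $\|\mA\vx\|_\infty \le \|\mA\vx\|_2 \le \|\mA(\vx-\vx_0)\|_2 + \|\mA\vx_0\|_2 \le (1+\rho)\diameter(\calS) + \|\mA\vx_0\|_2 < \infty$), and by the very definition of $M$ we have $\mA(\calS) \subset [-M,M]^d$, so $\mA$ is a $\rho$-JL embedding of $\calS$ into $[-M,M]^d$, exactly as required by the hypotheses of Theorem~\ref{thm:MainHolder}.

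Next I would invoke Corollary~\ref{cor:HolderFNNlowdim} with the rescaled H\"{o}lder pair $\bigl(\tfrac{L}{(1-\rho)^\alpha},\alpha\bigr)$ in place of $(L,\alpha)$. The key algebraic simplification is that $\bigl(\eps \big/ \tfrac{L}{(1-\rho)^\alpha}\bigr)^{1/\alpha} = (1-\rho)(\eps/L)^{1/\alpha}$, so the corollary provides a single ReLU architecture with at most $(p+C_1)\bigl(2\ceil{\tfrac{M\sqrt d}{(1-\rho)(\eps/L)^{1/\alpha}}}+1\bigr)^d$ edges, $C_2\bigl(2\ceil{\tfrac{M\sqrt d}{(1-\rho)(\eps/L)^{1/\alpha}}}+1\bigr)^d + p$ nodes, and $\ceil{\log_2(d+1)}+2$ layers that $\eps$-approximates every $\bigl(\tfrac{L}{(1-\rho)^\alpha},\alpha\bigr)$-H\"{o}lder function $g:[-M,M]^d\to\R^p$. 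Feeding this architecture into Theorem~\ref{thm:MainHolder}b then yields a new architecture that $\eps$-approximates every $(L,\alpha)$-H\"{o}lder $f:\calS\to\R^p$ at the cost of $+D$ nodes, $+Dd$ edges, and $+1$ layer; adding these to the counts above gives precisely the edge, node, and layer bounds claimed (with $\ceil{\log_2(d+1)}+2+1 = \ceil{\log_2(d+1)}+3$ layers).

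I do not expect a genuine obstacle: the argument is a bookkeeping composition of prior results. The two points that require care are (i) confirming that $M$ is finite and that $\mA(\calS)$ really lands inside $[-M,M]^d$ — handled by the bound above together with the definition of $M$ — and (ii) correctly propagating the $(1-\rho)$ factor through the H\"{o}lder-constant rescaling in Corollary~\ref{cor:HolderFNNlowdim}, so that the integer $N$ hidden in the ceiling becomes exactly $\ceil{M\sqrt d \big/ \bigl((1-\rho)(\eps/L)^{1/\alpha}\bigr)}$.
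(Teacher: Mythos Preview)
Your proposal is correct and follows essentially the same route as the paper's own proof: obtain the $\rho$-JL embedding from Proposition~\ref{prop:JLCoveringNumber}a or Proposition~\ref{prop:JLGaussianWidth}, set $M=\sup_{\vx\in\calS}\|\mA\vx\|_\infty$ so that $\mA(\calS)\subset[-M,M]^d$, apply Corollary~\ref{cor:HolderFNNlowdim} with H\"{o}lder constant $\tfrac{L}{(1-\rho)^\alpha}$, and then invoke Theorem~\ref{thm:MainHolder}b to add $D$ nodes, $Dd$ edges, and one layer. Your extra care in verifying finiteness of $M$ and in tracking the $(1-\rho)$ factor through the ceiling is sound and slightly more explicit than the paper's presentation.
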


By combining Proposition~\ref{prop:HolderCNNlowdim} with Proposition~\ref{prop:JLCoveringNumber}(b), we obtain the following result for approximating H\"{o}lder functions on a high-dimensional set by a ResNet type CNN. The proof of this theorem is given in Appendix~\ref{sec:ProofHolderCNN}.

\begin{thm}
\label{thm:HolderCNN}
Let $d < D$ be positive integers, and let $\alpha \in (0,1]$ and $\rho \in (0,\tfrac{1}{2})$ be constants. Let $\calS \subset \R^D$ be a bounded set and $U_\mathcal{S}$ be its set of unit secants. Suppose that
   $$d \gtrsim \rho^{-2}\log(4D+4d)\log \mathcal{N}(U_{\mathcal{S}}, \|\cdot\|_2, \tfrac{\rho}{4\sqrt{3D}}).$$  Then, for any $\alpha$-H\"{o}lder function $f : \calS \to \R^p$, there exists a ResNet type CNN $f^{\text{(CNN)}}$ in the form of \eqref{eq:recnn} with $O(N)$ residual blocks, each of which has a depth $O(\log N)$ and $O(1)$ channels, and whose filter size is at most $K$ such that $\|f-f^{\text{(CNN)}}\|_{\infty} \le \widetilde{O}(N^{-\alpha/d})$, where the $\widetilde{O}$ denotes that $\log N$ factors have been suppressed.
\end{thm}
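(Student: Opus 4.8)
The plan is to glue together the circulant Johnson–Lindenstrauss existence result (Proposition~\ref{prop:JLCoveringNumber}b) with the ResNet-CNN approximation rate on the cube (Proposition~\ref{prop:HolderCNNlowdim}), in exactly the manner of parts (c)–(d) of Theorem~\ref{thm:MainHolder}. First, observe that the hypothesis $d \gtrsim \rho^{-2}\log(4D+4d)\log\mathcal{N}(U_{\mathcal S}, \|\cdot\|_2, \tfrac{\rho}{4\sqrt{3D}})$ is precisely the condition in Proposition~\ref{prop:JLCoveringNumber}b, so there is a $\rho$-JL embedding $\mA = \mM\mD \in \R^{d\times D}$ of $\calS$ with $\mM$ a partial circulant matrix and $\mD$ a $\pm 1$ diagonal matrix. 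Since $\calS$ is bounded, $M := \sup_{\vx\in\calS}\|\mA\vx\|_\infty$ is finite, so $\mA$ is in fact a $\rho$-JL embedding of $\calS$ into $[-M,M]^d$. As $\mA$ is injective on $\calS$ (a $\rho$-JL embedding with $\rho<1$ never collapses two distinct points) and $\|\vx-\vx'\|_2 \le (1-\rho)^{-1}\|\mA\vx-\mA\vx'\|_2$, any $(L,\alpha)$-H\"{o}lder $f:\calS\to\R^p$ factors as $f = g\circ\mA|_{\calS}$ for some $(\tfrac{L}{(1-\rho)^\alpha},\alpha)$-H\"{o}lder $g:[-M,M]^d\to\R^p$: set $g(\mA\vx):=f(\vx)$ on $\mA(\calS)$ and extend H\"{o}lder-continuously to the whole cube (McShane-type extension componentwise, or the $\sqrt{p}$-free extension used in the proof of Theorem~\ref{thm:MainHolder}). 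This is the same factorization that underlies Theorem~\ref{thm:MainHolder}.

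Next I would approximate $g$ on the cube by a network of the form \eqref{eq:recnn}. After the affine change of variables $\vu\mapsto M\vu$ carrying $[-1,1]^d$ onto $[-M,M]^d$ (which only rescales the H\"{o}lder constant by $M^\alpha$), Proposition~\ref{prop:HolderCNNlowdim} applies to each scalar component $g_i$, and stacking the $p$ component networks in parallel — or, equivalently, letting the terminal fully connected layer $FC_{W,b}$ in \eqref{eq:recnn} collect the features — yields a single ResNet-type CNN $g^{\text{(CNN)}}:[-M,M]^d\to\R^p$ with $O(N)$ residual blocks, each of depth $O(\log N)$ and $O(1)$ channels, filter size at most $K\in\{2,\dots,d\}$, and $\|g-g^{\text{(CNN)}}\|_\infty \le \widetilde O(N^{-\alpha/d})$; the factors $M^\alpha$, $(1-\rho)^{-\alpha}$, and $p$ are absorbed into the suppressed constants.

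Finally I would prepend a subnetwork implementing $\vx\mapsto\mA\vx=\mM\mD\vx$ (followed by restriction to the appropriate $d$ coordinates) and set $f^{\text{(CNN)}}:=g^{\text{(CNN)}}\circ(\text{JL layers})$. For $\vx\in\calS$ this gives $\|f(\vx)-f^{\text{(CNN)}}(\vx)\|_\infty = \|g(\mA\vx)-g^{\text{(CNN)}}(\mA\vx)\|_\infty \le \widetilde O(N^{-\alpha/d})$ since $\mA\vx\in\mA(\calS)\subset[-M,M]^d$. The point is that $\mM\mD$, although not itself circulant, can be realized inside the template \eqref{eq:recnn} by a number of residual blocks that is constant in $N$: the sign flips $\mD$ via a two-channel ReLU gadget $x\mapsto\relu(x)-\relu(-x)$ with the appropriate signs, and the length-$D$ circular convolution $\mM$ via the standard decomposition of a long filter into $O(D/K)$ convolutions of filter size $\le K$ plus shift-and-add — this is exactly the construction used to prove parts (c)–(d) of Theorems~\ref{thm:MainContinuous} and \ref{thm:MainHolder}, which I would simply invoke. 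These prepended blocks have depth $O(1)\subseteq O(\log N)$, $O(1)$ channels, and filter size $\le K$, so the assembled network remains of the form \eqref{eq:recnn} with $O(N)+O(D)=O(N)$ residual blocks and the stated depth, channel, and filter-size budgets, and the error bound is unchanged.

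I expect the only genuine obstacle to be this last step: fitting the non-translation-invariant map $\mM\mD$ into the rigid residual-convolutional template of \eqref{eq:recnn} while keeping the $O(1)$-channel and $\le K$ filter-size budgets (in particular, decomposing the dense length-$D$ circulant filter into short ones). This is more bookkeeping than mathematics, and it has already been carried out in the proofs of Theorems~\ref{thm:MainContinuous}/\ref{thm:MainHolder}; everything else — the JL existence bound, the factorization $f=g\circ\mA$, and the cube-level approximation rate — is a direct appeal to the propositions cited above.
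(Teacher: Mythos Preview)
Your proposal is correct and follows essentially the same route as the paper's proof: invoke Proposition~\ref{prop:JLCoveringNumber}b for the structured JL map, factor $f=g\circ\mA$ via the H\"{o}lder extension underlying Theorem~\ref{thm:MainHolder} (Lemma~\ref{lem:JLfunctionHolder}), rescale to $[-1,1]^d$ and apply Proposition~\ref{prop:HolderCNNlowdim} componentwise in $p$ parallel branches, then prepend the ResNet-CNN realization of $\mA=\mM\mD$ from the proof of Theorem~\ref{thm:MainHolder}c. The paper is, if anything, less explicit than you are about squeezing the circulant filter into the $\le K$ filter-size budget---it simply asserts the JL block costs $O(1)$ residual blocks of depth $O(1)$---so your flagging of that step as the only genuine bookkeeping is apt.
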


\section{Applications to Inverse Problems}
Now we focus on inverse problems and demonstrate how the main theorems can be used to provide a reasonable estimate of the size of the neural networks needed to solve some classical inverse problems in signal processing. The problems we consider here are sparse recovery, blind deconvolution, and matrix completion. 
In all these inverse problems, we want to recover some  signal  $\vx\in \mathcal{S}$ from its forward measurement $\vy=F(\vx)$,  where the forward map $F$ is assumed to be known. The minimal assumption we have to impose on $F$ is that it has an inverse with a known modulus of continuity.  As we shall see, however, in the following examples we can in fact assume Lipschitz continuity of the inverse.\\

\noindent \textbf{Assumption 1} (invertibility of the forward map): Let $\mathcal{S}$ be the domain of the forward map $F$, and $\mathcal{Y}=F(\mathcal{S})$ be the range. Assume that the inverse operator $F^{-1}: \mathcal{Y}\rightarrow \mathcal{S}$ exists and is Lipschitz continuous with constant $L$, so that
\[
\|F^{-1}(\vy_1)-F^{-1}(\vy_2)\|_2 \leq L\|\vy_1-\vy_2\|_2, \quad \text{for all} \quad \vy_1,\vy_2\in \mathcal{Y}.
\]

For any inverse problems satisfying Assumption 1, Theorems~\ref{thm:HolderFNN} and \ref{thm:HolderCNN} provide ways to estimate  the size of the universal approximator networks for the inverse map. When applying the theorems to each problem, we need to first estimate the covering number of $U_{\mathcal{Y}}$.  Depending on the problem, one may estimate the covering number either numerically or theoretically. If the domain $\mathcal{Y}$ of the inverse map is irregular and discrete, then it may be easier to compute the covering number numerically. If the domain has a nice mathematical structure, then we may be able to estimate it theoretically. In the three examples below we will use theoretical estimation. From them, we see that it is quite common for inverse problems to have a small intrinsic complexity.  As a result, Theorems~\ref{thm:HolderFNN} and \ref{thm:HolderCNN} can significantly reduce the required size of the network below the sizes provided by previously known results.



We begin by emphasizing that the covering number that the theorems use is for the unit secants of $\mathcal{Y}$, which can be appreciably larger than the covering number of $\mathcal{Y}$ itself. \\

\noindent \textbf{Sparse recovery:} Sparsity is now one of the most commonly used priors in inverse problems as signals in many real applications possess a certain level of sparsity with respect to a given basis. For simplicity, we consider strictly sparse signals that have a small number of nonzero entries.
Let $\Sigma_s^N$ be the set of $s-$sparse vectors of length $N$.  Assume a sparse vector is measured linearly $\vy=\mPhi\vx\equiv F(\vx)$, where $\mPhi \in \R^{m \times N}$. The inverse problem amounts to recovering $\vx$ from $\vy$. Now that we want to use a network to approximate the inverse map $F^{-1}: \mPhi(\Sigma_s^N) \to \Sigma_s^N$, and estimate the size of the network through the theorems, we need to estimate the covering number of the unit secant $U_{\mPhi(\Sigma_s^N)}$.


\begin{prop}
\label{prop:SparseRecovery} 
Suppose $\mPhi$ satisfies the restricted isometry property so that the inverse  map $F^{-1}: \mPhi(\Sigma_s^N) \to \Sigma_s^N$ is Lipschitz continuous with Lipschitz constant $L$. Then, for any $\rho \in (0,\tfrac{1}{2})$, there exists a $\rho$-JL embedding $\mA \in \R^{d \times m}$ of the set of possible observations $$\mathcal{Y} = \{ \vy = \mPhi\vx : \vx \in \Sigma^N_s\},$$ into $\R^d$, provided that $$d \gtrsim \rho^{-2}s\log\dfrac{N\sqrt{m}}{s\rho}.$$ For such choice of $d$, we have that for any bounded subset $\calS \subset \mathcal{Y}$, there exists a ReLU neural network architecture with at most $$(N+C_1)\left(2\ceil{\dfrac{LM\sqrt{d}}{(1-\rho)\eps}}+1\right)^d + md \ \text{edges},$$ $$C_2\left(2\ceil{\dfrac{LM\sqrt{d}}{(1-\rho)\eps}}+1\right)^d + m + N \ \text{nodes},$$ $$\text{and} \ \ceil{\log_2(d+1)} + 3 \ \text{layers}$$ which can $\eps$-approximate the inverse map $F^{-1}$ over the set $\calS$. Here $M = \max_{\vy \in \calS}\|\mA\vy\|_2$.
\end{prop}


\noindent \textbf{Blind deconvolution:} Blind deconvolution concerns the recovery of a signal $\vx$ from its blurry measurements
\begin{equation}\label{eq:decon}
\vy = \vk \otimes \vx 
\end{equation}
when the kernel $\vk$ is also unknown. Here $\otimes$ denotes the convolution operator.
Note that blind-deconvolution is an ill-posed problem due to the existence of a scaling ambiguity between $\vx$ and $\vk$, namely, if $(\vk,\vx)$ is a solution, then $(\alpha \vk, \frac{1}{\alpha} \vx)$ with $\alpha \neq 0$ is also a solution. To resolve this issue,  we focus on recovering the outer product $\vx\vk^T$, where $\vx$ and $\vk$ here are both  column vectors. 

The recovery of the outer product $\vx\vk^T$ from the convolution $\vy=\vk \otimes \vx $ can be well-posed in various settings \citep{Lee15,Ahmed13}. For example, \citep{Ahmed13} showed that if we assume $\vx=\mPhi \vu$ and $\vk = \mPsi\vv$, where $\mPhi \in \mathbb{R}^{N \times n} (n<N)$ is i.i.d. Gaussian matrix and $\mPsi \in \mathbb{R}^{N \times m} (m<N)$ is a matrix of small coherence, then for large enough $N$,  the outer-product $\vx\vk^T$ can be stably recovered from $\vy$ in the following sense. For any two signal-kernel pairs $(\vx,\vk)$, $(\tilde{\vx},\tilde{\vk})$ and their corresponding convolutions $\vy$, $\tilde{\vy}$, we have
\begin{equation}\label{eq:lip}
\left\|\vx\vk^T - \tilde{\vx}\tilde{\vk}^T\right\| \leq L\|\vy - \tilde{\vy}\|_2
\end{equation}
for some $L$.
To determine the size of a neural network to approximate the inverse map by $F^{-1}: \vy \rightarrow \vx\vk^T$, we need to estimate the covering number of the unit secant cone of $\mathcal{Y} = \{ \vy = \vx \otimes \vk, \vx \in \mPhi \Sigma_s^N, \vk\in \text{span}{\mPsi}\}  $, which is done in the following proposition. The proof of this proposition is given in Appendix ~\ref{sec:ProofBlindDeconvolution}.


\begin{prop}\label{prop:BlindDeconvolution} Suppose the inverse  map $F^{-1}: \vy \rightarrow \vx\vk^T$ is Lipschitz continuous with Lipschitz constant $L$. Then, for any $\rho \in (0,\tfrac{1}{2})$, there exists a $\rho$-JL embedding $\mA \in \R^{d \times N}$ of the set of possible observations $$\mathcal{Y} = \{ \vy = \vx \otimes \vk, \vx \in \text{span}(\mPhi), \vk\in \text{span}({\mPsi})\},$$ into $\R^d$, provided that $$d \gtrsim \rho^{-2}\max\{m,n\}\log\dfrac{L\sqrt{N}}{\rho}.$$ For such choice of $d$, we have that for any bounded subset $\calS \subset \mathcal{Y}$, there exists a ReLU neural network architecture with at most $$(N^2+C_1)\left(2\ceil{\dfrac{LM\sqrt{d}}{(1-\rho)\eps}}+1\right)^d + Nd \ \text{edges},$$ $$C_2\left(2\ceil{\dfrac{LM\sqrt{d}}{(1-\rho)\eps}}+1\right)^d + N^2 + N \ \text{nodes},$$ $$\text{and} \ \ceil{\log_2(d+1)} + 3 \ \text{layers}$$ which can $\eps$-approximate the inverse map $F^{-1}$ over the set $\calS$. Here $M = \max_{\vy \in \calS}\|\mA\vy\|_2$
\end{prop}

\noindent \textbf{Matrix completion:}
Matrix Completion is a central task in machine learning where we want to recover a matrix from its partially observed entries. It arises from a number of applications including image super resolution \citep{shi2013low,cao2014image}, image/video denoising \citep{ji2010robust}, recommender systems \citep{zheng2016neural,monti2017geometric}, and gene-expression prediction \citep{kapur2016gene}, etc.. Recently neural network models  have achieved state-of-the-art
performance  \citep{zheng2016neural,monti2017geometric,dziugaite2015neural,he2017neural}, but a general existence result in the non-asymptotic regime is still missing. 

In this setting, the measurements $\mY=P_{\Omega}(\mX)$ consists of a set of observed entries of the unknown low-rank matrix $\mX$, where $\Omega$ is the index set of the observed entries and $P_\Omega$ is the mask that sets all but entries in $\Omega$ to 0. Let $M_r^{n,m}$ be the set of $n\times m$ matrices with rank at most $r$, and $\mX\in M_r^{n,m}$. If the mask is random, and the left and right eigenvectors $\mU, \mV$ of $\mX$ are incoherent in the sense that 
\begin{align}\label{eq:incoh}
\max_{1\leq i\leq n}& \left\|\mU^T\ve_i\right\|_2 \leq \sqrt{\frac{ \mu_0 r}{n}}, \quad \max_{1\leq i\leq m} \left\|\mV^T\ve_i\right\|_2 \leq \sqrt{\frac{ \mu_0 r}{m}}, ~{\rm and}\\ &  \max_{1\leq i\leq n,1\leq j\leq m} \left|(\mU\mV^T)_{i,j}\right| \leq \sqrt{\frac{ \mu_1 r}{nm}} \notag 
\end{align}
all hold, then it is known (see, e.g., \citep{Candes10}) that the inverse map $F^{-1}: \mY \rightarrow \mX$ exists and is Lipschitz continuous with overwhelming probability provided that the number of observations $$|\Omega| \gtrsim \mu_0r\max\{m,n\}\log^2 \max\{m,n\}.$$
Let us denote the set of low-rank matrices satisfying \eqref{eq:incoh} by $\mathcal{C}$. To estimate the complexity of the inverse map, we wiil now compute the covering numbers of $U_{\mathcal{Y}}$ for $\mathcal{Y}=\{\mY=P_{\Omega}(\mX): \mX \in M^{m,n}_r \cap \mathcal{C}\}$.  This is done using the following proposition, which is proven in Appendix~\ref{sec:ProofMatrixCompletion}.


\begin{prop}\label{prop:MatrixCompletion}
Suppose the mask is chosen so that the inverse  map $F^{-1}: \mY=P_{\Omega}(\mX) \rightarrow \mX$ is Lipschitz continuous with Lipschitz constant $L$. Then, for any $\rho \in (0,\tfrac{1}{2})$, there exists a $\rho$-JL embedding $A : \R^{m \times n} \to \R^d$ of the set of possible observations $$\mathcal{Y} = \{P_{\Omega}(\mX): \mX \in M^{m,n}_r \cap \mathcal{C}\},$$ into $\R^d$ provided that $$d \gtrsim \rho^{-2}r(m+n)\log\dfrac{L\sqrt{mn}}{\rho}.$$ For such choice of $d$, we have that for any bounded subset $\calS \subset \mathcal{Y}$, there exists a ReLU neural network architecture with at most $$(mn+C_1)\left(2\ceil{\dfrac{LM\sqrt{d}}{(1-\rho)\eps}}+1\right)^d + mnd \ \text{edges},$$ $$C_2\left(2\ceil{\dfrac{LM\sqrt{d}}{(1-\rho)\eps}}+1\right)^d + 2mn \ \text{nodes},$$ $$\text{and} \ \ceil{\log_2(d+1)} + 3 \ \text{layers}$$ which can $\eps$-approximate the inverse map $F^{-1}$ over the set $\calS$. Here $M = \max_{\mY \in \calS}\|\text{vec}(A(\mY))\|_{\infty}$
\end{prop}

\begin{remark}
In each of Propositions~\ref{prop:SparseRecovery}, \ref{prop:BlindDeconvolution}, and \ref{prop:MatrixCompletion}, it is shown that the number of nodes and edges in a neural network to solve a matrix completion problem scale (with respect to $\eps$) exponentially with the intrinsic dimension of the problem $d$ and not the much larger ambient dimension.
\end{remark}
\begin{remark}
One can also get bounds on the size of a ResNet CNN needed for each of these inverse problems by enlarging $d$ by a factor of the logarithm of the ambient dimension (which ensures a circulant $\rho$-JL embedding exists), and then applying Theorem~\ref{thm:HolderCNN} instead of Theorem~\ref{thm:HolderFNN}.
\end{remark}

\section{Conclusions, Limitations, and Discussion} 
The main message of this paper is that when neural networks are used to approximate H\"{o}lder continuous functions, the size of the network only needs to grow exponentially with respect to the intrinsic complexity of the input set measured using either its Gaussian width or its covering numbers.  Therefore, it is often more optimistic than previous estimates that require the size of the network to grow exponentially with respect to the extrinsic input dimension. 

We note that when the domain of the input is a manifold, our techniques would yield results that are slightly worse than optimal. Specifically, \cite{baraniuk2009random,Iwen22} both show that a $k$-dimensional manifold has a $\rho$-JL embedding into $d = \widetilde{O}(k/\rho^2)$ dimensions. As such, our results for the size of a neural network that approximates $\alpha$-H\"{o}lder functions on a manifold would scale with respect to $\eps$ like $O(\eps^{-d/\alpha})$ where $d = \widetilde{O}(k/\rho^2)$ as compared to the $O(\eps^{-k/\alpha}\log \tfrac{1}{\eps})$ scaling in \citep{chen2019efficient}. This is of course to be expected as the neural network in \citep{chen2019efficient} is constructed carefully with respect to the structure of the manifold, while our construction is more oblivious to the domain of the function. In addition, as explained in Section \ref{sec:Example}, our result only holds for H\"{o}lder indices $0 < \alpha \le 1$ as opposed to all $\alpha>0$ in \citep{chen2019efficient}. However, the use of the JL embedding allows our result to be stated under a more general (non-manifold) assumption of the input set and for a much broader class of neural networks -- although we only stated it for feedforward neural networks and the ResNet type of convolutional neural networks, the same idea naturally applies to other types of networks as long as an associated JL-map exists. 

The estimates the results herein provide for the network size ultimately depend on the complexity of the input set, measured by either the covering numbers, or by the Gaussian width, of its set of unit secants.  The computation of these quantities varies case by case, and in some cases might be rather difficult. This is a possible limitation of the proposed method. In particular, if the estimation of the input set complexity is not tight enough, the results herein may again become overly pessimistic. Having said that, for many classical inverse problems, the covering numbers and the Gaussian width estimates are not too difficult to calculate. As we demonstrated in Section 4, there are many known properties that one can use to facilitate the calculation. And, when a training dataset is given, one can even approximate covering numbers numerically with off-the-shelf algorithms. 

Finally, although the applications of neural networks to inverse problems are seeing a lot of current success, there are also failed attempts that don't work for unknown reasons.  One common explanation is that the size of the network in use is not large enough for the targeted application.  Since inverse problems models usually have a much higher intrinsic dimensionality than, say, image classification models, the required network sizes might indeed be much larger. Classical universal approximation theorems only guarantee small errors when the network size approaches infinity, therefore are not very helpful in the non-asymptotic regime where one has to choose the network size.  And, such parameter choices are now known to be critical to good performance. We hope the presented results provide more insight in this regard.   

\section*{Acknowledgments}
Rongrong Wang was supported in part by NSF CCF-2212065. Mark Iwen was supported in part by NSF DMS 2106472. We would also like to note that we became aware of similar independent work by Demetrio Labate and Ji Shi \citep{LabateShi23} during a poster session at Texas A\&M as part of the Inaugural CAMDA Conference (May 22 - 25, 2023) after the initial submission of this paper. We regard this independent and parallel development of similar results and proof strategies to be an indication of their timeliness, utility, and general interest.

\bibliographystyle{unsrt}
\bibliography{ref}
\vspace{\fill}
\pagebreak

\appendix

\section{Proof of Theorems~\ref{thm:MainContinuousFNN} and ~\ref{thm:MainContinuousCNN}}
\label{sec:ProofMainContinuous}
First, we prove the following lemma.

\begin{lem}
\label{lem:JLfunctionContinuous}
Let $d < D$ and $p$ be positive integers, and let $M > 0$ and $\rho \in (0,1)$ be constants. Let $\calS \subset \R^D$ be a bounded subset for which there exists a $\rho$-JL embedding $\mA \in \R^{d \times D}$ of $\calS$ into $[-M,M]^d$. Let $\Delta(r)$ be a modulus of continuity. Then, for any function $f : \calS \to \R^p$ that admits $\Delta(r)$ as a modulus of continuity, there exists a function $g : [-M,M]^d \to \R^p$ that admits $\sqrt{p}\Delta(\tfrac{r}{1-\rho})$ as a modulus of continuity such that $g(\mA\vx) = f(\vx)$ for all $\vx \in \calS$.
\end{lem}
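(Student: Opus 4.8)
The plan is to build $g$ in two stages: first define it on the image $\mA(\calS)\subseteq[-M,M]^d$ by transporting $f$ back through $\mA$, and then extend it to all of $[-M,M]^d$ by a McShane‑type (infimal‑convolution) extension carried out one coordinate at a time.

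First I would note that $\mA$ restricted to $\calS$ is injective: the lower bound in the $\rho$-JL condition gives $\|\mA\vx-\mA\vx'\|_2\ge(1-\rho)\|\vx-\vx'\|_2>0$ whenever $\vx\neq\vx'$. Hence $h:=f\circ(\mA|_{\calS})^{-1}:\mA(\calS)\to\R^p$ is well defined and satisfies $h(\mA\vx)=f(\vx)$ for all $\vx\in\calS$. For $\vy=\mA\vx$ and $\vy'=\mA\vx'$ in $\mA(\calS)$, combining the modulus bound for $f$, the lower JL bound, and monotonicity of $\Delta$ gives
\[
\|h(\vy)-h(\vy')\|_2=\|f(\vx)-f(\vx')\|_2\le\Delta(\|\vx-\vx'\|_2)\le\Delta\!\left(\tfrac{\|\vy-\vy'\|_2}{1-\rho}\right),
\]
so $h$, and a fortiori each coordinate $h_i$, admits $\omega(r):=\Delta(\tfrac{r}{1-\rho})$ as a modulus of continuity on $\mA(\calS)$. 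Note $h$ is bounded, since $f$ admits a modulus of continuity on the bounded set $\calS$.

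Next, for each $i\in\{1,\dots,p\}$ I would extend $h_i$ to $[-M,M]^d$ by the infimal‑convolution formula $g_i(\vy):=\inf_{\vz\in\mA(\calS)}\left[h_i(\vz)+\omega(\|\vy-\vz\|_2)\right]$. Using $\omega(0)=0$ together with the modulus bound on $h_i$, one checks in the standard way that $g_i$ is finite, agrees with $h_i$ on $\mA(\calS)$, and inherits $\omega$ as a modulus of continuity on $[-M,M]^d$. Setting $g:=(g_1,\dots,g_p)$ we get $g(\mA\vx)=f(\vx)$ for all $\vx\in\calS$, and for any $\vy,\vy'\in[-M,M]^d$,
\[
\|g(\vy)-g(\vy')\|_2=\left(\sum_{i=1}^p|g_i(\vy)-g_i(\vy')|^2\right)^{1/2}\le\sqrt{p}\,\omega(\|\vy-\vy'\|_2)=\sqrt{p}\,\Delta\!\left(\tfrac{\|\vy-\vy'\|_2}{1-\rho}\right),
\]
which is exactly the asserted modulus of continuity; the passage from coordinatewise bounds to the Euclidean bound is where the $\sqrt{p}$ factor enters.

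I expect the delicate step to be the claim that the infimal‑convolution extension preserves $\omega$. The usual estimate bounds $g_i(\vy)-g_i(\vy')\le\omega(\|\vy-\vz\|_2)-\omega(\|\vy'-\vz\|_2)$ for a near‑optimal $\vz$, and then uses the triangle inequality and monotonicity to get $\le\omega(\|\vy-\vy'\|_2+\|\vy'-\vz\|_2)-\omega(\|\vy'-\vz\|_2)$; bounding this by $\omega(\|\vy-\vy'\|_2)$ requires $\omega$ (equivalently $\Delta$) to be subadditive. For H\"older moduli $\Delta(r)=Lr^{\alpha}$ with $\alpha\in(0,1]$ this is immediate from $(a+b)^{\alpha}\le a^{\alpha}+b^{\alpha}$, so no extra hypothesis is needed there (this is the case behind Theorem~\ref{thm:MainHolder}); for a general modulus of continuity one either restricts attention to concave (hence subadditive) $\Delta$, or first replaces $\Delta$ by its least concave majorant over the compact range $[0,2M\sqrt{d}]$ of distances that actually occur in $[-M,M]^d$, checking that this does not disturb the rest of the argument — this is the point needing the most care. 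A minor additional check is that the infima defining $g_i$ are genuinely finite, which follows from boundedness of $h$ and of $\calS$.
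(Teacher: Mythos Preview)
Your approach is essentially identical to the paper's: define $\widetilde g=f\circ(\mA|_{\calS})^{-1}$ on $\mA(\calS)$, check it has modulus $\Delta(\tfrac{r}{1-\rho})$ via the lower JL bound, then extend each coordinate by the same infimal-convolution formula and collect the $\sqrt{p}$ factor when reassembling. You are in fact more careful than the paper on the one delicate point: the paper simply asserts that ``the above extension formula preserves the modulus of continuity of each coordinate $g_i$,'' while you correctly flag that the standard McShane argument needs subadditivity of $\Delta$, which is not part of the paper's definition of a modulus of continuity. For the H\"older case $\Delta(r)=Lr^{\alpha}$ this is automatic, so nothing is lost in the paper's applications; for a general $\Delta$ your concave-majorant workaround is the natural repair, though strictly speaking it would give the modulus $\sqrt{p}\,\overline{\Delta}(\tfrac{r}{1-\rho})$ rather than $\sqrt{p}\,\Delta(\tfrac{r}{1-\rho})$, so the lemma as stated tacitly assumes subadditivity.
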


\begin{proof}
For any $\vx,\vx' \in \calS$, if $\mA\vx = \mA\vx'$ then since $\mA$ is a $\rho$-JL embedding of $\calS$, we have that $\|\vx-\vx'\|_2 \le \tfrac{1}{1-\rho}\|\mA\vx-\mA\vx'\|_2 =\tfrac{1}{1-\rho}\|\vct{0}\|_2  = 0$, and so, $\|\vx-\vx'\|_2 = 0$, i.e., $\vx = \vx'$. Therefore, the map $\vx \mapsto \mA\vx$ from $\calS$ to $\mA(\calS) := \{\mA\vx : \vx \in \calS\}$ is invertible. We define $A^{-1} : \mA(\calS) \to \calS$ to be the inverse of the map $\vx \mapsto \mA\vx$.

Now, for any function $f : \calS \to \R^p$ which admits $\Delta(r)$ as a modulus of continuity, we define $\widetilde{g} : \mA(\calS) \to \R^p$ by $\widetilde{g} = f \circ A^{-1}$. Then, for any $\vy,\vy' \in \mA(\calS)$, we have 
\begin{align*}
\|\widetilde{g}(\vy)-\widetilde{g}(\vy')\|_2 &= \left\|f(A^{-1}(\vy))-f(A^{-1}(\vy'))\right\|_2 & \text{since} \ g = f \circ A^{-1}
\\
& \le \Delta\left(\left\|A^{-1}(\vy) - A^{-1}(\vy')\right\|_2\right) & \text{since} \ f \ \text{admits} \ \Delta(r) \ \text{as a M.O.C.}
\\
&\le \Delta\left(\tfrac{1}{1-\rho}\left\|\mA A^{-1}(\vy) - \mA A^{-1}(\vy')\right\|_2\right) & \text{since} \ \mA \ \text{is a} \ \rho\text{-JL embedding of} \ \calS
\\
&= \Delta\left(\tfrac{1}{1-\rho}\left\|\vy - \vy'\right\|_2\right). & \text{since} \ A^{-1} \ \text{is the inverse of} \ \vx \mapsto \mA\vx
\end{align*}

Therefore, $\widetilde{g} : \mA(\calS) \to \R^p$ admits $\Delta(\tfrac{r}{1-\rho})$ as a modulus of continuity. Then, since $\mA(\calS) \subset [-M,M]^d$, we can extend $\widetilde{g}$ to a function $g : [-M,M]^d \to \R^p$ via the definition $$g_i(\vy) := \inf_{\vz \in \mA(\calS)}\left[\widetilde{g}_i(\vz) + \Delta\left(\tfrac{1}{1-\rho}\|\vy-\vz\|_2\right)\right] \quad \text{for} \quad i = 1,\ldots,p.$$ Since $\widetilde{g}$ admits a modulus of continuity of $\Delta(\tfrac{r}{1-\rho})$, so does each coordinate $\widetilde{g}_i$. The above extension formula preserves the modulus of continuity of each coordinate $g_i$, and so $g$ admits modulus of continuity $\sqrt{p}\Delta(\tfrac{r}{1-\rho})$. Also, this extension satisfies $g(\vy) = \widetilde{g}(\vy)$ for all $\vy \in \mA(\calS)$. Finally, for any $\vx \in \calS$, we have $\mA\vx \in \mA(\calS)$, and so, $g(\mA\vx) = \widetilde{g}(\mA\vx) = f(A^{-1}(\mA\vx)) = f(\vx)$, as required.
\end{proof}

With Lemma~\ref{lem:JLfunctionContinuous}, we can now prove each of the parts of Theorems~\ref{thm:MainContinuousFNN} and \ref{thm:MainContinuousCNN}. As a reminder, we assume that $\calS \subset \R^D$ is a bounded set for which there exists a $\rho$-JL embedding $\mA \in \R^{d \times D}$ of $\calS$ into $[-M,M]^d$.

1a) Let $f: \calS \to \R^p$ be a function that admits $\Delta(r)$ as a modulus of continuity. By Lemma~\ref{lem:JLfunctionContinuous}, there exists a function $g : [-M,M]^d \to \R^p$ that admits $\sqrt{p}\Delta(\tfrac{r}{1-\rho})$ as a modulus of continuity such that $f(\vx) = g(\mA\vx)$ for all $\vx \in \calS$. By assumption, $g$ can be $\eps$-approximated by a feedforward neural network with at most $\calN$ nodes, $\calE$ edges, and $\calL$ layers. In other words, there exists a function $\widehat{g}$ such that $\|\widehat{g}(\vy)-g(\vy)\|_{\infty} \le \eps$ for all $\vy \in [-M,M]^d$, and $\widehat{g}$ can be implemented by a feedforward neural network with at most $\calN$ nodes, $\calE$ edges, and $\calL$ layers. 

Define another function $\widehat{f} = \widehat{g} \circ \mA$, i.e., $\widehat{f}(\vx) = \widehat{g}(\mA\vx)$ for all $\vx \in \calS$. Since $\mA(\calS) \subset [-M,M]^d$ by assumption, we have that $\mA\vx \in [-M,M]^d$ for all $\vx \in \calS$. Then, $\|\widehat{f}(\vx)-f(\vx)\|_{\infty} = \|\widehat{g}(\mA\vx)-g(\mA\vx)\|_{\infty} \le \eps$ for all $\vx \in \calS$, i.e., $\widehat{f}$ is an $\eps$-approximation of $f$. 

Furthermore, we can construct a feedforward neural network to implement $\widehat{f} = \widehat{g} \circ \mA$ by having a linear layer to implement the map $\vx \mapsto \mA\vx$, and then feeding this into the neural network implementation of $\widehat{g}$. The map $\vx \mapsto \mA\vx$ can be implemented with $D$ nodes for the input layer, and $Dd$ edges between the input nodes and the first hidden layer. By assumption, $\widehat{g}$ can be implemented by a feedforward neural network with at most $\calN$ nodes, $\calE$ edges, and $\calL$ layers. Hence, $\widehat{f} = \widehat{g} \circ \mA$ can be implemented by a feedforward neural network with at most $\calN+D$ nodes, $\calE+Dd$ edges, and $\calL+1$ layers, as desired. 

1b) If the same feedforward neural network architecture $\eps$-approximates every function $g : [-M,M]^d \mapsto \R^p$ that admits $\sqrt{p}\Delta(\tfrac{r}{1-\rho})$ as a modulus of continuity, then our construction of a feedforward neural network that implements $\widehat{f} = \widehat{g} \circ \mA$ has the same architecture for every function $f : \calS \to \R^p$ that admits $\Delta(r)$ as a modulus of continuity. Hence, the same bounds on the number of nodes, edges, and layers hold.

2a) In a similar manner as 1a), we form a CNN that can approximate $f = g \circ \mA$ by first implementing the linear map $\vx \mapsto \mA\vx$ with a CNN and feeding this into a CNN that approximates $g$.

The JL matrix $\mA=\mM\mD$ can be represented by a Resnet-CNN structure as follows. Let $\vx$ be the input of the network, then $\mD\vx$, the random sign flip of the input can be realized by setting the weight/kernel $w_1$ of the first two layers to be the delta function, and the bias vectors to take large values at the location where $\mD$ has a $1$, and small values where $\mD$ has a $-1$. Then with the help of the ReLU activation, we can successfully flip the signs. More explicitly, set $T = \sup_{\vx \in \calS}\|\vx\|_{\infty}$ so that $\vx\in [-T,T]^D$ for all $\vx \in \calS$. Let $\vb_i$ be the bias to be added to the $i$th coordinate of the input. We design a 2 layer Resnet-CNN, $\ell(\vx)$, as follows
\[
\ell(\vx)_i = \text{ReLU}(2\vx_i+\vb_i)-\text{ReLU}(\vb_i)-\vx_i,  \quad i=1,...,D .
\]
The bias $\vb_i$ is chosen to realize the sign flip as follows.
If $\mD_{ii}$ contains a $1$, then we set $\vb_i = 2T$, which will make $\ell(\vx)_i = \vx_i$. If $\mD_{ii}$ contains a $-1$, then we set $\vb_i = -2T$, which will make $\ell(\vx)_i = -\vx_i$, thus realizing the sign-flip. A similar architecture can also be used to realize the application of $\mM$ to $\mD\vx$, which is a convolution followed by a mask (i.e., setting certain entries of $\vx$ to 0). Specifically, we let $\vm \in \R^D$ denote the first row of the partial circulant matrix $\mM$, and we set $T' = \sup_{\vx \in \calS}\|\vm \otimes \mD\vx\|_{\infty}$ so that $\vm \otimes \mD\vx \in [-T',T']^D$. Then, we let $\vb' \in \R^D$ be a bias vector whose first $d$ entries are $T'$ and whose last $D-d$ entries are $-T'$. Then, we pass $\mD\vx = \ell(\vx)$ through the following Resnet-CNN $$\ell'(\mD\vx) = \text{ReLU}(\vm \otimes \mD\vx+\vb')-\text{ReLU}(\vb').$$ This convolves $\mD\vx$ with $\vm$ and then sets the last $D-d$ entries to $0$.

This Resnet-CNN that implements $\vx \mapsto \mA\vx$ requires $2D$ nodes, $D$ parameters, and $2$ layers to apply $\mD$ to $\vx$, and an additional $2D$ nodes, $D$ parameters, and $2$ layers to apply $\mM$ to $\mD\vx$. By adding this to the $\calN$ nodes, $\calP$ parameters, and $\calL$ layers needed for a CNN to approximate the function $g : [-M,M]^d \to \R^p$ that admits $\sqrt{p}\Delta(\tfrac{r}{1-\rho})$ as a modulus of continuity, we obtain that the function $f : \calS \to \R^p$ that admits $\Delta(r)$ as a modulus of continuity can be approximated by a Resnet-CNN with $\calN+4D$ nodes, $\calP+2D$ parameters, and $\calL+4$ layers.

2b) If the same convolutional neural network architecture $\eps$-approximates every function $g : [-M,M]^d \mapsto \R^p$ that admits $\sqrt{p}\Delta(\tfrac{r}{1-\rho})$ as a modulus of continuity, then our construction of a convolutional neural network that implements $\widehat{f} = \widehat{g} \circ \mA$ has the same architecture for every function $f : \calS \to \R^p$ that admits $\Delta(r)$ as a modulus of continuity. Hence, the same bounds on the number of nodes, parameters, and layers hold.

\section{Proofs of Theorems~\ref{thm:MainHolderFNN} and \ref{thm:MainHolderCNN}}
\label{sec:ProofMainHolder}

We can strengthen Lemma~\ref{lem:JLfunctionContinuous} for the special case where $\Delta(r) = Lr^{\alpha}$ for some constants $L > 0$ and $\alpha \in (0,1]$, ($f$ is $(L,\alpha)$-H\"{o}lder) in a way that removes the $\sqrt{p}$ factor. This is done by using a theorem which allows us to extend an $\mathbb{R}^p$-valued H\"{o}lder function instead of extending each coordinate separately.

\begin{lem}
\label{lem:JLfunctionHolder}
Let $d < D$ and $p$ be positive integers, and let $L,M > 0$, $\alpha \in (0,1]$, and $\rho \in (0,1)$ be constants. Let $\calS \subset \R^D$ be a bounded subset for which there exists a $\rho$-JL embedding $\mA \in \R^{d \times D}$ of $\calS$ into $[-M,M]^d$. Then, for any $(L,\alpha)$-H\"{o}lder function $f : \calS \to \R^p$, there exists an $(\tfrac{L}{(1-\rho)^{\alpha}},\alpha)$-H\"{o}lder function $g : [-M,M]^d \to \R^p$ such that $g(\mA\vx) = f(\vx)$ for all $\vx \in \calS$.
\end{lem}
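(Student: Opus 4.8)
The plan is to follow the proof of Lemma~\ref{lem:JLfunctionContinuous} verbatim up to the extension step, and there replace the coordinatewise infimal-convolution (McShane) extension — which costs a factor of $\sqrt{p}$ — by a genuinely vector-valued H\"older extension that preserves the H\"older constant exactly.

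First I would, exactly as in Lemma~\ref{lem:JLfunctionContinuous}, use the lower $\rho$-JL bound to conclude that $\vx \mapsto \mA\vx$ is injective on $\calS$, so its inverse $A^{-1} : \mA(\calS) \to \calS$ is well defined, and set $\widetilde{g} := f \circ A^{-1} : \mA(\calS) \to \R^p$. For $\vy,\vy' \in \mA(\calS)$, writing $\vx = A^{-1}(\vy)$, $\vx' = A^{-1}(\vy')$, and combining the $(L,\alpha)$-H\"older bound on $f$ with $\norm{\vx - \vx'}_2 \le \tfrac{1}{1-\rho}\norm{\mA\vx - \mA\vx'}_2$ gives
\[
\norm{\widetilde{g}(\vy) - \widetilde{g}(\vy')}_2 \le L\norm{\vx - \vx'}_2^{\alpha} \le \tfrac{L}{(1-\rho)^{\alpha}}\norm{\vy - \vy'}_2^{\alpha},
\]
so $\widetilde{g}$ is $(\tfrac{L}{(1-\rho)^{\alpha}},\alpha)$-H\"older on $\mA(\calS) \subseteq [-M,M]^d$.

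The remaining step is to extend $\widetilde{g}$ from $\mA(\calS)$ to all of $[-M,M]^d$ without enlarging the H\"older constant, and this is the crux. I would invoke a vector-valued H\"older extension theorem: any $(C,\alpha)$-H\"older map from a subset of $\R^d$ into $\R^p$, both with the Euclidean metric, extends to a $(C,\alpha)$-H\"older map on all of $\R^d$. The cleanest derivation is by snowflaking: by Schoenberg's theorem the metric space $(\R^d,\norm{\cdot}_2^{\alpha})$ embeds isometrically into some Hilbert space $H$ via a map $\iota$ (valid for all $\alpha\in(0,1]$), and a $(C,\alpha)$-H\"older map on a subset of $\R^d$ is exactly a $C$-Lipschitz map on the corresponding subset of $H$; applying the Kirszbraun--Valentine extension theorem for Lipschitz maps between Hilbert spaces (which preserves the Lipschitz constant, with no dependence on dimensions) and composing back with $\iota$ produces a $(\tfrac{L}{(1-\rho)^{\alpha}},\alpha)$-H\"older extension $g_0 : \R^d \to \R^p$ of $\widetilde{g}$. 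Taking $g := g_0|_{[-M,M]^d}$ then gives the desired map, and finally $g(\mA\vx) = \widetilde{g}(\mA\vx) = f(A^{-1}(\mA\vx)) = f(\vx)$ for all $\vx\in\calS$.

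The only real obstacle is this extension step: the argument used in Lemma~\ref{lem:JLfunctionContinuous}, which extends each coordinate separately, genuinely incurs the $\sqrt{p}$ loss, so one must use a Kirszbraun-type (rather than McShane-type) extension that handles the $\R^p$-valued map as a whole. Everything else is identical to the continuous case.
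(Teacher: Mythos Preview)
Your proposal is correct and follows essentially the same route as the paper: define $\widetilde{g}=f\circ A^{-1}$, verify it is $(\tfrac{L}{(1-\rho)^{\alpha}},\alpha)$-H\"older on $\mA(\calS)$ via the JL lower bound, and then invoke a vector-valued Kirszbraun-type extension to avoid the $\sqrt{p}$ loss. The only cosmetic difference is that the paper cites Minty's generalization of Kirszbraun directly, whereas you reach the same conclusion by snowflaking via Schoenberg and then applying Kirszbraun--Valentine.
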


\begin{proof}
For any $(L,\alpha)$-H\"{o}lder function $f : \calS \to \R^p$, we can use the same argument as in the proof of Lemma~\ref{lem:JLfunctionContinuous} to show that the function $\widetilde{g} : \mA(\calS) \to \R^p$ defined by $\widetilde{g} = f \circ A^{-1}$ satisfies 
\begin{align*}
\|\widetilde{g}(\vy)-\widetilde{g}(\vy')\|_2 &= \left\|f(A^{-1}(\vy))-f(A^{-1}(\vy'))\right\|_2 & \text{since} \ g = f \circ A^{-1}
\\
& \le L\left\|A^{-1}(\vy) - A^{-1}(\vy')\right\|_2^{\alpha} & \text{since} \ f \ \text{is} \ (L,\alpha)\text{-H\"{o}lder}
\\
&\le \dfrac{L}{(1-\rho)^{\alpha}}\left\|\mA A^{-1}(\vy) - \mA A^{-1}(\vy')\right\|_2^{\alpha} & \text{since} \ \mA \ \text{is a} \ \rho\text{-JL embedding of} \ \calS
\\
&= \dfrac{L}{(1-\rho)^{\alpha}}\left\|\vy - \vy'\right\|_2^{\alpha}. & \text{since} \ A^{-1} \ \text{is the inverse of} \ \vx \mapsto \mA\vx
\end{align*}
for all $\vy, \vy' \in \mA(\calS)$. Therefore, $\widetilde{g} : \mA(\calS) \to \R^p$ is $(\tfrac{L}{(1-\rho)^{\alpha}},\alpha)$-H\"{o}lder. Then, since $\mA(\calS) \subset [-M,M]^d$, by Theorem 1(ii) in \citep{minty1970extension} (which is a generalization of the Kirszbraun theorem \citep{schwartz1969nonlinear}), there exists a $(\tfrac{L}{(1-\rho)^{\alpha}},\alpha)$-H\"{o}lder extension of $\widetilde{g}$ to $[-M,M]^d$, i.e., a function $g : [-M,M]^d \to \R^p$ which is $(\tfrac{L}{(1-\rho)^{\alpha}},\alpha)$-H\"{o}lder on $[-M,M]^d$ and satisfies $g(\vy) = \widetilde{g}(\vy)$ for all $\vy \in \calS$. Finally, for any $\vx \in \calS$, we have $\mA \vx \in \mA( \mathcal{S} )$, and so, $g(\mA\vx) = \widetilde{g}(\mA\vx) = f(A^{-1}(\mA\vx)) = f(\vx)$, as required.
\end{proof}

The proofs of each of the parts of Theorems~\ref{thm:MainHolderFNN} and \ref{thm:MainHolderCNN} are identical to the proofs of the corresponding parts of Theorems~\ref{thm:MainContinuousFNN} and \ref{thm:MainContinuousCNN} respectively, except they use Lemma~\ref{lem:JLfunctionHolder} instead of Lemma~\ref{lem:JLfunctionContinuous}. 

\section{Theorems 1-4 cannot be generalized to differentiable functions}
\label{sec:Example}
Fix positive integers $D \ge 3$ and $d < D$. Define a set $\calS \subset \R^D$ by $\calS = \{\vx \in [-2,2]^D : \|\vx\|_0 \le 2\}$, define the function $f : \calS \to \R$ given by $f(\vx) = \|\vx\|_2^2$, which is smooth. Now, suppose there exists a $\rho$-JL embedding $\mA \in \R^{d \times D}$ of $\calS$ into some hypercube $[-M,M]^d$ (with $d < D$) and a differentiable function $g : [-M,M]^d \to \R$ which satisfies $g(\mA\vx) = f(\vx)$ for all $\vx \in \calS$. 

Then, for any differentiable path $\phi(t) \in \calS$, we must have $$g(\mA\phi(t)) = f(\phi(t)) = \|\phi(t)\|_2^2$$ and thus, $$\nabla g(\mA\phi(t))^T\mA\phi'(t) = 2\phi(t)^T\phi'(t).$$ Let $\ve_1,\ldots,\ve_D \in \R^D$ be the Euclidean basis vectors. For any indices $i,j \in \{1,\ldots,D\}$, we can apply the above result for the differentiable path $\phi(t) = \ve_i+t\ve_j$ and evaluate at $t = 0$ to obtain $$\nabla g(\mA\phi(0))^T\mA\phi'(0) = 2\phi(0)^T\phi'(0)$$
$$\nabla g(\mA \ve_i)^T\mA\ve_j = 2\ve_i^T\ve_j$$ Since this holds for all indices $i,j \in \{1,\ldots,D\}$, we have that $$\nabla g(\mA \ve_i)^T\mA = 2\ve_i^T,$$ for all indices $i \in \{1,\ldots,D\}$, which implies that $2\ve_i$ is in the rowspace of $\mA$ for all $i \in \{1,\ldots,D\}$. However, this is impossible since $\mA$ has $d < D$ rows. Hence, there cannot exist a $\rho$-JL embedding $\mA \in \R^{d \times D}$ of $\calS$ into some hypercube $[-M,M]^d$ (with $d < D$) and a differentiable function $g : [-M,M]^d \to \R$ which satisfies $g(\mA\vx) = f(\vx)$ for all $\vx \in \calS$. 

\section{Proof of Proposition~\ref{prop:JLFiniteToInfinite}}
\label{sec:ProofJLFiniteToInfinite}
Consider a covering of $U_{\calS}$ by $\calN(U_{\calS},\|\cdot\|_2,\delta)$ balls of radius $\delta$. Each ball must intersect $U_{\calS}$ as otherwise we could remove that ball from the covering and obtain a covering of $U_{\calS}$ with only $\calN(U_{\calS},\|\cdot\|_2,\delta)-1$ balls of radius $\delta$, which contradicts the definition of $\calN(U_{\calS},\|\cdot\|_2,\delta)$. Enumerate these balls $i = 1,\ldots,\calN(U_{\calS},\|\cdot\|_2,\delta)$. For each $i$, pick a point $\vu_i \in U_{\calS}$ which is also in the $i$-th ball, and then pick points $\vx_i,\vx'_i \in \calS$ with $\vx_i \neq \vx'_i$ such that $\tfrac{\vx_i-\vx'_i}{\|\vx_i-\vx'_i\|_2} = \vu_i$. Then, set $\calS_1 = \{\vx_i\}_i \cup \{\vx'_i\}_i$ so $|\calS_1|\le 2\mathcal{N}(U_{\calS},\|\cdot\|_2,\delta)$.

Suppose $\mA \in \R^{d \times D}$ is a $\rho$-JL embedding of $\calS_1$. Then, by definition of a $\rho$-JL embedding, $$(1-\rho)\|\vx_i-\vx'_i\|_2 \leq \|\mA\vx_i-\mA\vx'_i\|_2 \leq (1+\rho)\|\vx_i-\vx'_i\|_2, \quad \text{for} \quad i = 1,\ldots,\calN(U_{\calS},\|\cdot\|_2,\delta)$$
Now, for any two points $\vy,\vy' \in \calS$ with $\vy \neq \vy'$, there exists an index $i$ such that $\tfrac{\vy-\vy'}{\|\vy-\vy'\|_2} \in U_{\calS}$ lies in the $i$-th ball of our covering of $U_{\calS}$. Since $\tfrac{\vx_i-\vx'_i}{\|\vx_i-\vx'_i\|_2}$ is also in the $i$-th ball, we have that $$\left \|\frac{\vx_i-\vx_i'}{\|\vx_i-\vx_i'\|_2}- \frac{\vy-\vy'}{\|\vy-\vy'\|_2}\right\|_2 \leq 2\delta.$$
For simplicity of notation, we set $a=\|\vx_i-\vx_i'\|_2, b = \|\vy-\vy'\|_2$. 
Then we immediately have
\begin{align*}
\|\mA(\vy-\vy')\|_2 &= \left\|\frac{b}{a}\mA(\vx_i-\vx_i')+ \mA(\vy-\vy')-\frac{b}{a}\mA(\vx_i-\vx_i')\right\|_2 \\
& \leq \frac{b}{a} \left\|\mA(\vx_i-\vx_i')\right\|_2 + \left\|\mA(\vy-\vy'-\frac{b}{a}(\vx_i-\vx_i'))\right\|_2 \\
&\leq \frac{b}{a}(1+\rho)\|\vx_i-\vx_i'\|_2 + \|\mA\|_2 \left\|\vy-\vy'-\frac{b}{a}(\vx_i-\vx_i')\right\|_2 \\
& \leq (1+\rho)b + 2\|\mA\|_2\delta b = (1+\rho +2\|\mA\|_2\delta)\|\vy-\vy'\|_2,
\end{align*}
where the second inequality used the previous two formulae.
The other side of the bi-Lipschitz formula can be proved similarly. Hence, $\mA$ is also a $(\rho+2\|\mA\|_2\delta)$-JL embedding of $\calS$.

\section{Proof of Proposition~\ref{prop:JLCoveringNumber}}
\label{sec:ProofJLCoveringNumber}
a) By Proposition~\ref{prop:JLFiniteToInfinite}, there exists a finite set $\calS_1$ with at most $|\calS_1| \le 2\calN(U_{\calS},\|\cdot\|_2,\tfrac{\rho}{4\sqrt{3D}})$ points such that any $\tfrac{\rho}{2}$-JL embedding of $\calS_1$ is also a $(\tfrac{\rho}{2}+\|\mA\|_2\tfrac{\rho}{2\sqrt{3D}})$-JL embedding of $\calS$. 

We now show that there exists a matrix $\mA \in \R^{d \times D}$ with $\|\mA\|_2 \le \sqrt{3D}$ which is $\tfrac{\rho}{2}$-JL embedding of $\calS_1$ by generating a random $\mA$ and showing that the probability of $\|\mA\|_2 \le \sqrt{3D}$ and $\mA$ is a $\tfrac{\rho}{2}$-JL embedding of $\calS_1$ both occurring is greater than zero. 

Let $\mA \in \R^{d \times D}$ be a random matrix whose entries are i.i.d. from a subgaussian distribution with mean $0$ and variance $\tfrac{1}{d}$. Since $$\mathbb{E}\|\mA\|_F^2 = \sum_{i = 1}^{d}\sum_{j = 1}^{D}\mathbb{E}\mA_{i,j}^2 = \sum_{i = 1}^{d}\sum_{j = 1}^{D}\dfrac{1}{d} = D,$$ we have that $$\mathbb{P}\left\{\|\mA\|_F^2 \ge 3D\right\} \le \dfrac{\mathbb{E}\|\mA\|_F^2}{3D} = \dfrac{1}{3}.$$ Furthermore, since $$d \gtrsim \rho^{-2}\log \calN(U_{\calS},\|\cdot\|_2,\tfrac{\rho}{4\sqrt{3D}}) \gtrsim \left(\dfrac{\rho}{2}\right)^{-2}\log(3|\calS_1|),$$ by Proposition~\ref{prop:SubgaussianJL}, $\mA$ is a $\tfrac{\rho}{2}$-JL embedding of $\calS_1$ with probability at least $1-\tfrac{1}{3} = \tfrac{2}{3}$. Therefore, $\mA$ is both a $\tfrac{\rho}{2}$-JL embedding of $\calS_1$ and satisfies $\|\mA\|_2 \le \|\mA\|_F \le \sqrt{3D}$ with probability at least $\tfrac{2}{3}-\tfrac{1}{3} = \tfrac{1}{3} > 0$. 

Hence, there exists a matrix $\mA \in \R^{d \times D}$ such that $\mA$ is a $\tfrac{\rho}{2}$-JL embedding of $\calS_1$ and satisfies $\|\mA\|_2 \le \sqrt{3D}$. Finally, by Proposition~\ref{prop:JLFiniteToInfinite}, since $\mA$ is a $\tfrac{\rho}{2}$-JL embedding of $\calS_1$, it is also a $(\tfrac{\rho}{2}+\|\mA\|_2\tfrac{\rho}{2\sqrt{3D}})$-JL embedding of $\calS$. Since $\|\mA\|_2 \le \sqrt{3D}$, we have $\tfrac{\rho}{2}+\|\mA\|_2\tfrac{\rho}{2\sqrt{3D}} \le \rho$, and thus, $\mA$ is a $\rho$-JL embedding of $\calS$, as desired.

b) Again, by Proposition~\ref{prop:JLFiniteToInfinite}, there exists a finite set $\calS_1$ with at most $|\calS_1| \le 2\calN(U_{\calS},\|\cdot\|_2,\tfrac{\rho}{4\sqrt{3D}})$ points such that any $\tfrac{\rho}{2}$-JL embedding of $\calS_1$ is also a $(\tfrac{\rho}{2}+\|\mA\|_2\tfrac{\rho}{2\sqrt{3D}})$-JL embedding of $\calS$. 

Let $\mA \in \R^{d \times D}$ be a random matrix of the form $\mM\mD$ where $\mD \in \R^{D \times D}$ is a diagonal matrix whose entries are independent Rademacher random variables, and $\mM \in \R^{d \times D}$ is a random circulant matrix whose entries are Gaussian random variables with mean $0$ and variance $\tfrac{1}{d}$ and entries in different diagonals are independent. Again, we can show that $$\mathbb{E}\|\mA\|_F^2 = \sum_{i = 1}^{d}\sum_{j = 1}^{D}\mathbb{E}\mA_{i,j}^2 = \sum_{i = 1}^{d}\sum_{j = 1}^{D}\mathbb{E}\mM_{i,j}^2\mD_{j,j}^2 = \sum_{i = 1}^{d}\sum_{j = 1}^{D}\mathbb{E}\mM_{i,j}^2 = \sum_{i = 1}^{d}\sum_{j = 1}^{D}\dfrac{1}{d} = D,$$ and so, $$\mathbb{P}\left\{\|\mA\|_F^2 \ge 3D\right\} \le \dfrac{\mathbb{E}\|\mA\|_F^2}{3D} = \dfrac{1}{3}.$$ Now, set $\alpha = \log(\log(4D+4d))/\log(\log|\calS_1|)$ so that $\log^{\alpha}|\calS_1| = \log(4D+4d)$. Then, since $$d \gtrsim \rho^{-2}\log(4D+4d)\log\calN(U_{\calS},\|\cdot\|_2,\tfrac{\rho}{4\sqrt{3D}}) \gtrsim \left(\dfrac{\rho}{2}\right)^{-2}\log^{1+\alpha}|\calS_1|,$$ by Proposition~\ref{prop:CirculantJL}, $\mA$ is a $\tfrac{\rho}{2}$-JL embedding of $\calS_1$ with probability at least $$\dfrac{2}{3}\left(1-(D+d)e^{-\log^{\alpha}|\calS_1|}\right) = \dfrac{2}{3}\left(1-(D+d)e^{-\log(4D+4d)}\right) = \dfrac{2}{3}\left(1-\dfrac{1}{4}\right) = \dfrac{1}{2}.$$ Therefore, $\mA$ is both a $\tfrac{\rho}{2}$-JL embedding of $\calS_1$ and satisfies $\|\mA\|_2 \le \|\mA\|_F \le \sqrt{3D}$ with probability at least $\tfrac{1}{2}-\tfrac{1}{3} = \tfrac{1}{6} > 0$. 

Hence, there exists a matrix $\mA \in \R^{d \times D}$ such that $\mA$ is a $\tfrac{\rho}{2}$-JL embedding of $\calS_1$ and satisfies $\|\mA\|_2 \le \sqrt{3D}$. Again, by Proposition~\ref{prop:JLFiniteToInfinite}, since $\mA$ is a $\tfrac{\rho}{2}$-JL embedding of $\calS_1$, it is also a $(\tfrac{\rho}{2}+\|\mA\|_2\tfrac{\rho}{2\sqrt{3D}})$-JL embedding of $\calS$. Since $\|\mA\|_2 \le \sqrt{3D}$, we have $\tfrac{\rho}{2}+\|\mA\|_2\tfrac{\rho}{2\sqrt{3D}} \le \rho$, and thus, $\mA$ is a $\rho$-JL embedding of $\calS$, as desired.

\section{Proof of Proposition~\ref{prop:ContinuousFNNlowdim}}
\label{sec:ProofContinuousFNNlowdim}

We first construct a function $\widehat{g}$ that is an $\eps$-approximation of $g$. To do this, we first define a compactly supported ``spike'' function $\phi : \R^d \to [0,1]$ by $$\phi(\vz) = \max\left\{1+\min\left\{\vz_1,\ldots,\vz_d,0\right\}-\max\left\{\vz_1,\ldots,\vz_d,0\right\} ,0\right\}.$$

Then, for any positive integer $N$, define an approximation $\widehat{g} : [-M,M]^d \to \R^p$ to $g$ by $$\widehat{g}(\vy) := \sum_{\vn \in \{-N,\ldots,N\}^d}g(\tfrac{M\vn}{N})\phi(\tfrac{N\vy}{M}-\vn).$$

Similarly to what was done in \citep{Yarotsky18}, it can be shown that the scaled and shifted spike functions $\{\phi(\tfrac{N\vy}{M}-\vn)\}_{\vn \in \{-N,\ldots,N\}^d}$ form a partition of unity, i.e. $$\sum_{\vn \in \{-N,\ldots,N\}^d}\phi(\tfrac{N\vy}{M}-\vn) = 1 \quad \text{for all} \quad \vy \in [-M,M]^d.$$ Trivially, $\phi(\vy) \ge 0$ for all $\vy \in \R^d$. Also, one can check that $\supp(\phi) \subseteq [-1,1]^d$, and thus, $\phi(\tfrac{N\vy}{M}-\vn) = 0$ for all $\vn$ such that $\|\tfrac{N\vy}{M}-\vn\|_{\infty} > 1$. Furthermore, for any $\vn$ such that $\|\tfrac{N\vy}{M}-\vn\|_{\infty} \le 1$, we have $$\left\|g(\vy)-g(\tfrac{M\vn}{N})\right\|_2 \le \Delta\left(\|\vy-\tfrac{M\vn}{N}\|_2\right) \le \Delta\left(\sqrt{d}\|\vy-\tfrac{M\vn}{N}\|_{\infty}\right) = \Delta\left(\tfrac{M\sqrt{d}}{N}\|\tfrac{N\vy}{M}-\vn\|_{\infty}\right) \le \Delta \left(\tfrac{M\sqrt{d}}{N}\right).$$ Hence, we can bound the approximation error for any $\vy \in [-M,M]^d$ as follows:

\begin{align*}
\left\|\widehat{g}(\vy)-g(\vy)\right\|_2 &= \left\|\sum_{\vn \in \{-N,\ldots,N\}^d}g(\tfrac{M\vn}{N})\phi(\tfrac{N\vy}{M}-\vn) - g(\vy)\right\|_2
\\
&= \left\|\sum_{\vn \in \{-N,\ldots,N\}^d}\left(g(\tfrac{M\vn}{N})-g(\vy)\right)\phi(\tfrac{N\vy}{M}-\vn)\right\|_2
\\
&\le \sum_{\vn \in \{-N,\ldots,N\}^d}\left\|g(\tfrac{M\vn}{N})-g(\vy)\right\|_2\phi(\tfrac{N\vy}{M}-\vn)
\\
&= \sum_{\left\|\tfrac{N\vy}{M}-\vn\right\|_{\infty} \le 1}\left\|g(\tfrac{M\vn}{N})-g(\vy)\right\|_2\phi(\tfrac{N\vy}{M}-\vn)
\\
&\le \sum_{\left\|\tfrac{N\vy}{M}-\vn\right\|_{\infty} \le 1}\Delta \left(\tfrac{M\sqrt{d}}{N}\right)\phi(\tfrac{N\vy}{M}-\vn)
\\
&\le \sum_{\vn \in \{-N,\ldots,N\}^d}\Delta \left(\tfrac{M\sqrt{d}}{N}\right)\phi(\tfrac{N\vy}{M}-\vn)
\\
&= \Delta \left(\tfrac{M\sqrt{d}}{N}\right).
\end{align*}
So $\|\widehat{g}(\vy)-g(\vy)\|_{\infty} \le \|\widehat{g}(\vy)-g(\vy)\|_2 \le \Delta \left(\tfrac{M\sqrt{d}}{N}\right)$ for all $\vy \in [-M,M]^d$, i.e., $\widehat{g}$ is a $\Delta \left(\tfrac{M\sqrt{d}}{N}\right)$-approximation of $g$.

We now focus on constructing a ReLU NN architecture which can implement the $\eps$-approximation $\widehat{g}$ for any function $g$ that admits $\Delta(r)$ as a modulus of continuity. We do this by first constructing a ReLU NN that is independent of $g$ which implements the map $\Phi : \R^d \to \R^{(2N+1)^d}$ defined by $(\Phi(\vy))_{\vn} = \phi(\tfrac{N\vy}{M}-\vn)$. Then, we add a final layer which outputs the appropriate linear combination of the $\phi(\tfrac{N\vy}{M}-\vn)$'s.

\begin{lem}
For any integers $N,d \ge 1$, the maps $m_d : \R^d \to \R^{(2N+1)^d}$ and $M_d : \R^d \to \R^{(2N+1)^d}$ defined by $$(m_d(\vy))_{\vn} := \min\left\{\tfrac{N\vy_1}{M}-\vn_1,\ldots,\tfrac{N\vy_d}{M}-\vn_d,0\right\} \quad \text{for} \quad \vn \in \{-N,\ldots,N\}^d$$ and $$(M_d(\vy))_{\vn} := \max\left\{\tfrac{N\vy_1}{M}-\vn_1,\ldots,\tfrac{N\vy_d}{M}-\vn_d,0\right\} \quad \text{for} \quad \vn \in \{-N,\ldots,N\}^d,$$ can both be implemented by a ReLU NN with $O((2N+1)^d)$ weights, $O((2N+1)^d)$ nodes, and $\ceil{\log_2(d+1)}$ layers.
\end{lem}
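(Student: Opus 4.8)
\ \ The plan is to implement $m_d$ and $M_d$ by a balanced binary tree of two-input min/max gadgets, exactly in the spirit of the $d$-dimensional ``spike'' construction of \citep{Yarotsky18}. The basic building block is the identity, valid for all $a,b\in\R$, that $\min\{a,b\}=\tfrac12\bigl(a+b-\relu(a-b)-\relu(b-a)\bigr)$ and $\max\{a,b\}=\tfrac12\bigl(a+b+\relu(a-b)+\relu(b-a)\bigr)$. Thus a two-input $\min$ or $\max$ is computed by a single ReLU layer with two ReLU units and $O(1)$ edges, preceded by an affine map (forming $a-b$ and $b-a$) and followed by an affine map. The affine post-map of one gadget composes with the affine pre-map of the next, so the value produced by one gadget is available as a linear function of the ReLU outputs feeding the next layer, and stacking $\ell$ gadget-layers costs exactly $\ell$ ReLU layers.

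First I would set $q:=\ceil{\log_2(d+1)}$ and regard the $d+1$ scalars $\tfrac{N\vy_1}{M}-\vn_1,\ldots,\tfrac{N\vy_d}{M}-\vn_d,0$ as the leaves of a complete binary tree, padding with $2^q-(d+1)$ additional leaves equal to $0$ (which affects neither $\min$ nor $\max$). Each leaf is an affine function of the network input $\vy$: coordinate $\vy_i$ spawns the $2N+1$ values $\tfrac{N}{M}\vy_i-n$, $n\in\{-N,\ldots,N\}$, and the remaining leaves are constants. A node at level $k\in\{1,\ldots,q\}$ whose two children carry the partial extrema over disjoint coordinate blocks $B_1,B_2$ applies a min- (resp.\ max-) gadget to each pair of a value from the $B_1$-child and a value from the $B_2$-child, producing all $(2N+1)^{|B_1\cup B_2|}$ partial extrema indexed by $\vn|_{B_1\cup B_2}$. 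At the root the block is $\{1,\ldots,d\}$ together with the zero leaves, which reinstates the ``$\min$ (resp.\ $\max$) with $0$'', so the root outputs precisely $(m_d(\vy))_{\vn}$ (resp.\ $(M_d(\vy))_{\vn}$) for all $\vn\in\{-N,\ldots,N\}^d$; running both trees in parallel yields $m_d$ and $M_d$ together.

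It remains to count. The depth is $q=\ceil{\log_2(d+1)}$ by the composition remark. The complete tree has $2^{q-k}$ nodes at level $k$, and a level-$k$ node carries at most $\min\{2^k,d\}$ genuine coordinates, hence at most $(2N+1)^{\min\{2^k,d\}}$ values, each realized with $O(1)$ ReLU units and $O(1)$ edges; so level $k$ uses $O\!\bigl(2^{q-k}(2N+1)^{\min\{2^k,d\}}\bigr)$ nodes and edges. Since $q=\ceil{\log_2(d+1)}$ forces $2^{q-1}\le d<2^q$, the root level alone accounts for $O((2N+1)^d)$, while for $k\le q-1$ the bound equals $2^{q-k}(2N+1)^{2^k}$, and since consecutive such terms grow by the factor $\tfrac12(2N+1)^{2^k}\ge\tfrac32$ the sum over $k=0,\ldots,q-1$ is at most a constant times the top term $2(2N+1)^{2^{q-1}}\le 2(2N+1)^d$. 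Together with the $d(2N+1)=O((2N+1)^d)$ edges of the input affine map, this gives $O((2N+1)^d)$ nodes and $O((2N+1)^d)$ edges in all.

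The step I expect to be the main obstacle is this size estimate: one must observe that the per-level counts form a super-exponentially increasing sequence, so that summing over the $\ceil{\log_2(d+1)}$ tree levels still yields $O((2N+1)^d)$ with no stray logarithmic factor, and one must be careful that padding the leaf list to a power of two does not inflate the number of distinct partial extrema at any node — it does not, because a padding zero enlarges a node's leaf block but never its coordinate block.
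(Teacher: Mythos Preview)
Your plan is sound and reaches the stated bounds, but it differs structurally from the paper's argument, and there is one small technical point you should patch.

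\textbf{Comparison with the paper.} The paper does not build the full binary tree with sharing. It splits the $d+1$ arguments \emph{once} into blocks of sizes $\lceil d/2\rceil$ and $\lfloor d/2\rfloor+1$, invokes as a black box the Arora et al.\ result that a $k$-ary $\min$ costs $O(k)$ nodes/edges and $\lceil\log_2 k\rceil$ layers, applies that black box \emph{independently} to each of the $(2N+1)^{\lceil d/2\rceil}$ first-block indices and the $(2N+1)^{\lfloor d/2\rfloor}$ second-block indices (no sharing within a block), and then merges with a single layer of $(2N+1)^d$ two-input $\min$'s. The paper's count is therefore just
\[
O(d)\bigl((2N+1)^{\lceil d/2\rceil}+(2N+1)^{\lfloor d/2\rfloor}\bigr)+O\bigl((2N+1)^d\bigr),
\]
and the elementary inequality $(d+1)\cdot 3^{-\lfloor d/2\rfloor}\le 2$ absorbs the first term into the second. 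Your construction instead shares intermediate extrema at every tree level and controls the total via a geometric sum; both succeed because the root level dominates either way. Your version is more parsimonious in actual node count, the paper's is more modular (it never needs your super-geometric summation).

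\textbf{A small gap.} With only two ReLU units per gadget, the output $\min\{a,b\}=\tfrac12(a+b)-\tfrac12\relu(a-b)-\tfrac12\relu(b-a)$ is \emph{not} a linear function of those two ReLU outputs alone: it still carries the term $\tfrac12(a+b)$. Unrolling your recursion, a level-$k$ ReLU input then depends on $\Theta(2^k)$ earlier quantities (all ReLU outputs in the subtree plus the leaf affine part), so the root layer alone would cost $\Theta(d)\cdot(2N+1)^d$ edges, overshooting the claimed $O((2N+1)^d)$. The standard fix is to add two pass-through units per gadget, e.g.\ $\relu(a+b)$ and $\relu(-(a+b))$, so that the gadget output becomes a linear combination of $O(1)$ ReLU outputs at the \emph{current} layer. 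This restores the $O(1)$ fan-in you implicitly assume in your size estimate, after which your counting goes through unchanged. The paper sidesteps this issue entirely because it has only one merging layer on top of black-box subnetworks whose outputs are already realised as nodes.
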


\begin{proof}
First, we note that we can write 
\begin{align*}
(m_d(\vy))_{\vn} =& \min\left\{\min\{\tfrac{N\vy_1}{M}-\vn_1,\ldots,\tfrac{N\vy_{\ceil{d/2}}}{M}-\vn_{\ceil{d/2}}\}, \right.
\\
& \ \ \ \ \ \ \ \ \ \ \left.\min\{\tfrac{N\vy_{\ceil{d/2}+1}}{M}-\vn_{\ceil{d/2}+1},\ldots,\tfrac{N\vy_d}{M}-\vn_d,0\}\right\}
\end{align*} and \begin{align*}
(M_d(\vy))_{\vn} =& \max\left\{\max\{\tfrac{N\vy_1}{M}-\vn_1,\ldots,\tfrac{N\vy_{\ceil{d/2}}}{M}-\vn_{\ceil{d/2}}\}, \right.
\\
& \ \ \ \ \ \ \ \ \ \ \left.\max\{\tfrac{N\vy_{\ceil{d/2}+1}}{M}-\vn_{\ceil{d/2}+1},\ldots,\tfrac{N\vy_d}{M}-\vn_d,0\}\right\}
\end{align*} 

In \citep{arora2016understanding}, it is shown that for any positive integer $k$, the maps $(\vz_1,\ldots,\vz_k) \mapsto \min\{\vz_1,\ldots,\vz_k\}$ and $(\vz_1,\ldots,\vz_k)\mapsto \max\{\vz_1,\ldots,\vz_k\}$ can be implemented by a ReLU NN with at most $c_1k$ edges, $c_2k$ nodes, and $\ceil{\log_2 k}$ layers, where $c_1, c_2 > 0$ are universal constants. So to construct the map $m_d$, we first implement the $(2N+1)^{\ceil{d/2}}$ maps \begin{equation}\label{eq:firsthalf}(\vy_1,\ldots,\vy_{\ceil{d/2}}) \mapsto \min\{\tfrac{N\vy_1}{M}-\vn_1,\ldots,\tfrac{N\vy_{\ceil{d/2}}}{M}-\vn_{\ceil{d/2}}\}\end{equation} for $(\vn_1,\ldots,\vn_{\ceil{d/2}}) \in \{-N,\ldots,N\}^{\ceil{d/2}}$. Implementing each of these maps requires $c_1\ceil{\tfrac{d}{2}}$ edges, $c_2\ceil{\tfrac{d}{2}}$ nodes, and $\ceil{\log_2\ceil{\tfrac{d}{2}}}$ layers. Next, we implement the $(2N+1)^{\floor{d/2}}$ maps \begin{equation}\label{eq:secondhalf}(\vy_{\ceil{d/2}+1},\ldots,\vy_d) \mapsto \min\{\tfrac{N\vy_{\ceil{d/2}+1}}{M}-\vn_{\ceil{d/2}+1},\ldots,\tfrac{N\vy_d}{M}-\vn_d,0\}\end{equation} for $(\vn_{\ceil{d/2}+1},\ldots,\vn_d) \in \{-N,\ldots,N\}^{\floor{d/2}}$. Implementing each of these maps requires $c_1(\floor{\tfrac{d}{2}}+1)$ edges, $c_2(\floor{\tfrac{d}{2}}+1)$ nodes, and $\ceil{\log_2(\floor{\tfrac{d}{2}}+1)}$ layers. After placing these $(2N+1)^{\ceil{d/2}}+(2N+1)^{\floor{d/2}}$ maps in parallel, we construct one final layer as follows. For each $\vn = (\vn_1,\ldots,\vn_d) \in \{-N,\ldots,N\}^d$, we combine the output of the $(\vn_1,\ldots,\vn_{\ceil{d/2}})$-th map of the form in Equation~\ref{eq:firsthalf} and the output of the $(\vn_{\ceil{d/2}+1},\ldots,\vn_d)$-th map of the form in Equation~\ref{eq:secondhalf} by using them as inputs to a ReLU NN that implements the map $(a,b) \mapsto \min\{a,b\}$. Each of these requires at most $2c_1$ edges and $2c_2$ nodes.

The total number of edges used to implement $m_d$ is \begin{align*}
&c_1\ceil{\tfrac{d}{2}}(2N+1)^{\ceil{d/2}} + c_1(\floor{\tfrac{d}{2}}+1)(2N+1)^{\floor{d/2}} + 2c_1(2N+1)^d
\\
\le & c_1(\ceil{\tfrac{d}{2}}+\floor{\tfrac{d}{2}}+1)(2N+1)^{\ceil{d/2}}+2c_1(2N+1)^d
\\
= & c_1(d+1)(2N+1)^{\ceil{d/2}}+2c_1(2N+1)^d
\\
= & c_1\left((d+1)(2N+1)^{-\floor{d/2}}+2\right)(2N+1)^d
\\
\le & c_1\left((d+1)\cdot 3^{-\floor{d/2}}+2\right)(2N+1)^d
\\
\le & 4c_1(2N+1)^d,
\end{align*} where we have used the fact that $N \ge 1$ by definition, and the easily verifiable inequality $(d+1)\cdot 3^{-\floor{d/2}} \le 2$ for all positive integers $d$.

A nearly identical calculation shows that the total number of nodes used to implement $m_d$ is at most $4c_2(2N+1)^d$. Finally, since the $(2N+1)^{\ceil{d/2}}$ maps of the form in Equation~\ref{eq:firsthalf} and the $(2N+1)^{\floor{d/2}}$ maps of the form in Equation~\ref{eq:secondhalf} are in parallel, the total number of layers used to implement $m_d$ is $$\max\left\{\ceil{\log_2\ceil{\tfrac{d}{2}}},\ceil{\log_2(\floor{\tfrac{d}{2}}+1)}\right\}+1 = \ceil{\log_2(d+1)}.$$

Hence, the map $m_d$ can be implemented by a ReLU NN with at most $C_1(2N+1)^d$ edges, $C_2(2N+1)^d$ nodes, and $\ceil{\log_2(d+1)}$ layers, as desired. The proof for $M_d$ is identical, except with $\min$ replaced by $\max$. 
\end{proof}

Next, we note that $$(\Phi(\vy))_{\vn} = \phi(\tfrac{N\vy}{M}-\vn) = \max\left\{1+(m_d(\vy))_{\vn}-(M_d(\vy))_{\vn},0\right\} \quad \text{for all} \quad \vn \in \{-N,\ldots,N\}^d.$$ So to construct a ReLU NN which implements $\Phi$, we first place a ReLU NN that implements $m_d$ in parallel with a ReLU NN that implements $M_d$. Then, we add an extra layer which has $(2N+1)^d$ nodes, where the $\vn$-th node of this layer has two edges, one from the $\vn$-th node of $m_d$ and one from the $\vn$-th node of $M_d$. Since $m_d$ and $M_d$ are in parallel and each can each be implemented with ReLU NNs with $O((2N+1)^d)$ edges, $O((2N+1)^d)$ nodes, and $\ceil{\log_2(d+1)}$ layers, and the last layer has $2(2N+1)^d$ edges and $(2N+1)^d$ nodes, the ReLU NN which implements $\Phi$ has $O((2N+1)^d)$ edges, $O((2N+1)^d)$ nodes, and $\ceil{\log_2(d+1)}+1$ layers.

Finally, we can construct a ReLU NN which implements $$\widehat{g}(\vx) := \sum_{\vn \in \{-N,\ldots,N\}^d}g(\tfrac{M\vn}{N})\phi(\tfrac{N\vx}{M}-\vn)$$ by using the ReLU NN which implements $\Phi$, followed by a linear layer which computes the weighted sum for $\widehat{g}$. This last layer has $p$ nodes, and $p(2N+1)^d$ edges. So the ReLU NN that implements $\widehat{g}$ has $(p+C_1)(2N+1)^d$ edges, $C_2(2N+1)^d+p$ nodes, and $\ceil{\log_2(d+1)}+2$ layers, as desired.

\section{Proof of Theorem~\ref{thm:ContinuousFNN}}
\label{sec:ProofContinuousFNN}

By combining Proposition~\ref{prop:JLCoveringNumber}a and Proposition~\ref{prop:JLGaussianWidth}, we have that there exists a $\rho$-JL embedding $\mA \in \R^{d \times D}$ of $\calS$ with $$d \gtrsim \min\left\{ \rho^{-2}\log  \mathcal{N}(U_{\mathcal{S}}, \|\cdot\|_2, \tfrac{\rho}{4\sqrt{3D}}),\ \ \rho^{-2}\left(\omega(U_{\calS})\right)^2\right\}.$$ Let $M = \sup_{\vx \in \calS}\|\mA\vx\|_{\infty}$ so that $\mA(\calS) \subset [-M,M]^d$, and so, $\mA$ is a $\rho$-JL embedding of $\calS$ into $[-M,M]^d$. By Proposition~\ref{prop:ContinuousFNNlowdim}, there exists a ReLU NN architecture with at most $$\calE = (p+C_1)(2N+1)^d \ \text{edges},$$ 
$$\calN = C_2(2N+1)^d + p \ \text{nodes},$$
$$\text{and} \ \calL = \ceil{\log_2(d+1)} + 2 \ \text{layers},$$ which can $\sqrt{p}\Delta(\tfrac{M\sqrt{d}}{(1-\rho)N})$-approximate any function $g : [-M,M]^d \to \R^p$ which admits a modulus of continuity $\sqrt{p}\Delta(\tfrac{r}{1-\rho})$. Finally, by applying Theorem~\ref{thm:MainContinuousFNN}b, we have that there exists a a ReLU NN architecture with at most $$\calE+Dd = (p+C_1)(2N+1)^d + Dd \ \text{edges},$$ 
$$\calN+D = C_2(2N+1)^d + p + D \ \text{nodes},$$
$$\text{and} \ \calL + 1 = \ceil{\log_2(d+1)} + 3 \ \text{layers},$$ which can $\sqrt{p}\Delta(\tfrac{M\sqrt{d}}{(1-\rho)N})$-approximate any function $f : \calS \to \R^p$ that admits a modulus of continuity of $\Delta(r)$, as desired.

\section{Proof of Theorem~\ref{thm:HolderFNN}}
\label{sec:ProofHolderFNN}

By combining Proposition~\ref{prop:JLCoveringNumber}a and Proposition~\ref{prop:JLGaussianWidth}, we have that there exists a $\rho$-JL embedding $\mA \in \R^{d \times D}$ of $\calS$ with $$d \gtrsim \min\left\{ \rho^{-2}\log  \mathcal{N}(U_{\mathcal{S}}, \|\cdot\|_2, \tfrac{\rho}{4\sqrt{3D}}),\ \ \rho^{-2}\left(\omega(U_{\calS})\right)^2\right\}.$$ Let $M = \sup_{\vx \in \calS}\|\mA\vx\|_{\infty}$ so that $\mA(\calS) \subset [-M,M]^d$, and so, $\mA$ is a $\rho$-JL embedding of $\calS$ into $[-M,M]^d$. By Corollary~\ref{cor:HolderFNNlowdim}, there exists a ReLU NN architecture with at most $$\calE = (p+C_1)\left(2\ceil{\dfrac{M\sqrt{d}}{(1-\rho)(\eps/L)^{1/\alpha}}}+1\right)^d \ \text{edges},$$ 
$$\calN = C_2\left(2\ceil{\dfrac{M\sqrt{d}}{(1-\rho)(\eps/L)^{1/\alpha}}}+1\right)^d + p \ \text{nodes},$$
$$\text{and} \ \calL = \ceil{\log_2(d+1)} + 2 \ \text{layers},$$ which can $\eps$-approximate any $(\tfrac{L}{(1-\rho)^{\alpha}},\alpha)$-H\"{o}lder function $g : [-M,M]^d \to \R^p$. Finally, by applying Theorem~\ref{thm:MainHolderFNN}b, we have that there exists a ReLU NN architecture with at most $$\calE+Dd = (p+C_1)\left(2\ceil{\dfrac{M\sqrt{d}}{(1-\rho)(\eps/L)^{1/\alpha}}}+1\right)^d + Dd \ \text{edges},$$ 
$$\calN+D = C_2\left(2\ceil{\dfrac{M\sqrt{d}}{(1-\rho)(\eps/L)^{1/\alpha}}}+1\right)^d + p + D \ \text{nodes},$$
$$\text{and} \ \calL + 1 = \ceil{\log_2(d+1)} + 3 \ \text{layers},$$ which can $\eps$-approximate any $(L,\alpha)$-H\"{o}lder function $f : \calS \to \R^p$, as desired.

\section{Proof of Theorem~\ref{thm:HolderCNN}}
\label{sec:ProofHolderCNN}

Let $f : \calS \to \R^p$ be the $\alpha$-H\"{o}lder target function to approximate. By Proposition~\ref{prop:JLCoveringNumber}b, we have that there exists a matrix $\mA \in \R^{d \times D}$ in the form $\mM\mD$ where $\mM$ is a partial circulant matrix and $\mD$ is a diagonal matrix with $\pm 1$ entries such that $\mA$ is a $\rho$-JL embedding of $\calS$ with $$d \gtrsim \rho^{-2}\log(4D+4d)\log \mathcal{N}(U_{\mathcal{S}}, \|\cdot\|_2, \tfrac{\rho}{4\sqrt{3D}}).$$ Let $M = \sup_{\vx \in \calS}\|\mA\vx\|_{\infty}$ so that $\mA(\calS) \subset [-M,M]^d$, and so, $\mA$ is a $\rho$-JL embedding of $\calS$ into $[-M,M]^d$. 

By Lemma~\ref{lem:JLfunctionHolder}, there exists an $\alpha$-H\"{o}lder function $g : [-M,M]^d \to \R^p$ such that $g(\mA\vx) = f(\vx)$ for all $\vx \in \calS$. Let $g_i : [-M,M]^d \to \R$ be the $i$-th coordinate of $g$. Let $\widetilde{g}_i : [-1,1]^d \to \R$ be defined by $\widetilde{g}_i(\vy) = g_i(M\vy)$ for all $\vy \in [-1,1]^d$. Note that each $g_i$ is $\alpha$-H\"{o}lder, and so, each $\widetilde{g}_i$ is also $\alpha$-H\"{o}lder.

Then, by Proposition~\ref{prop:HolderCNNlowdim}, for each $\widetilde{g}_i$, there exists a CNN $\widetilde{g}_i^{(CNN)}$ with $O(N)$ residual blocks, each of which has depth $O(\log N)$ and $O(1)$ channels, and whose filter size is at most $K$ such that $\|\widetilde{g}_i-\widetilde{g}_i^{(CNN)}\|_{\infty} \le \widetilde{O}(N^{-\alpha/d})$.

Now, we construct a CNN to approximate $f$ as follows. First, we implement the map $\vx \mapsto \tfrac{1}{M}\mA\vx$ using the same $4$ layer ResNet CNN described in the proof of Theorem~\ref{thm:MainHolderCNN}c. Then, we pass the output of that Resnet CNN into $p$ parallel CNNs which implement $\widetilde{g}_i^{(CNN)}$ for $i = 1,\ldots,p$. The output of the $i$-th of these parallel CNNs is $\widetilde{g}_i^{(CNN)}(\tfrac{1}{M}\mA\vx)$, which is an $\widetilde{O}(N^{-\alpha/d})$-approximation of $\widetilde{g}_i(\tfrac{1}{M}\mA\vx) = g_i(\mA\vx) = f_i(\vx)$. Hence, the constructed CNN is a $\widetilde{O}(N^{-\alpha/d})$-approximation of $f$. 

The CNN which implements the map $\vx \mapsto \tfrac{1}{M}\mA\vx$ needs $O(1)$ residual blocks, each of which has depth $O(1)$ and $O(1)$ channels. Each of the $p$ parallel CNNs which implement the $\widetilde{g}_i^{(CNN)}$'s have $O(N)$ residual blocks, each of which has depth $O(\log N)$ and $O(1)$ channels. So the overall network to approximate $f$ has $O(pN)$ residual blocks, each of which has depth $O(\log N)$ and $O(1)$ channels.

\section{Proof of Proposition~\ref{prop:SparseRecovery}}
\label{sec:ProofSparseRecovery}

\begin{proof} By definition, the unit secants of $\mathcal{Y}=\mPhi(\Sigma_s^N)$ are defined to be
\[
U_{\mathcal{Y}} =\left\{ \frac{\vy_1-\vy_2}{\|\vy_1-\vy_2\|_2}, \quad \vy_1,\vy_2 \in \mPhi(\Sigma_s^N)\right\} 
\]
which contains all unit vectors that are linear combinations of $2s$ columns of $\mPhi$.
Letting $T$ with $|T| = 2s$ be a fixed support set, the covering number of $\text{span}(\mPhi_T)\cap \mathbb{S}^{N-1}$ is $(\frac{3}{\delta})^{2s}$, so the covering number of $U_{\mathcal{Y}}$ is at most $\binom{N}{2s}(\frac{3}{\delta})^{2s} \le (\tfrac{eN}{2s})^{2s}(\frac{3}{\delta})^{2s}$. Therefore, $$\log \mathcal{N}(U_{\mathcal{Y}},\|\cdot\|_2,\delta)\lesssim  s \log \frac{N}{s\delta}.$$

Now, let $\calS$ be any bounded subset of $\calY$. Note that the inverse map $F^{-1}$ is $L$-Lipschitz (i.e. $(L,\alpha)$-H\"{o}lder for $\alpha = 1$). Since $\calS \subset \calY$, we have that $U_{\calS} \subset U_{\calY}$, and thus, $\calN(U_{\calS},\|\cdot\|_2,\delta) \le \calN(U_{\calY},\|\cdot\|_2,\delta)$ for any $\delta > 0$. Therefore, the choice of $d$ in this proposition yields $$d \gtrsim \rho^{-2}s\log\dfrac{N\sqrt{m}}{s\rho} \gtrsim \rho^{-2}\log\calN(U_{\calY},\| \cdot \|_2,\tfrac{\rho}{4\sqrt{3m}}) \ge \rho^{-2}\log\calN(U_{\calS},\| \cdot \|_2,\tfrac{\rho}{4\sqrt{3m}}).$$

Therefore, the conditions of Theorem~\ref{thm:HolderFNN} are satisfied for $\alpha = 1$, and the result follows.

\end{proof}

\section{Proof of Proposition~\ref{prop:BlindDeconvolution}}
\label{sec:ProofBlindDeconvolution}

We first bound the logarithm of the covering number of the set of unit secants of $\calY$. Then, we apply Theorem~\ref{thm:HolderFNN} for the case of $\alpha = 1$, i.e. Lipschitz continuous functions.

By the $\sin\Theta$ theorem \citep{wedin1972perturbation}, we have 
\begin{equation}\label{eq:sin1}
\left\|\frac{\vx_1}{\|\vx_1\|_2} - \frac{\vx_2}{\|\vx_2\|_2}\right\|_2 \leq \frac{\|\vx_1\vk_1^T-\vx_2\vk_2^T\|}{\|\vx_1\|_2\|\vk_1\|_2}
\end{equation}
and 
\begin{equation}\label{eq:sin2}
\left\|\frac{\vk_1}{\|\vk_1\|_2} - \frac{\vk_2}{\|\vk_2\|_2}\right\|_2 \leq \frac{\|\vx_1\vk_1^T-\vx_2\vk_2^T\|}{\|\vx_2\|_2\|\vk_2\|_2}.
\end{equation}
Let us find a set whose covering number is easy to compute while containing the unit secant $U_\mathcal{Y}$ as a subset
\begin{align*}
& \left\{ \frac{\vy_1-\vy_2}{\|\vy_1-\vy_2\|_2}, \ \vy_1, \vy_2 \in \mathcal{Y}  \right\} =\left\{ \frac{\vx_1\otimes \vk_1-\vx_2\otimes \vk_2}{\|\vx_1\otimes \vk_1-\vx_2\otimes \vk_2\|_2}, \ \vx_i = \mPhi \vu_i, \vk_i =\mPsi \vv_i, \vx_i\otimes \vk_i\in \mathcal Y, i=1,2 \right\}
\\ & = \left\{ \frac{\vx_1\otimes \vk_1-(\frac{\|\vx_1\|_2}{\|\vx_2\|_2}\vx_2)\otimes \vk_1}{\|\vx_1\otimes \vk_1-\vx_2\otimes \vk_2\|_2} + \frac{(\frac{\|\vx_1\|_2}{\|\vx_2\|_2}\vx_2)\otimes \vk_1-\vx_2\otimes \vk_2}{\|\vx_1\otimes \vk_1-\vx_2\otimes \vk_2\|_2}, \ \vx_i = \mPhi \vu_i, \vk_i =\mPsi \vv_i, \vx_i\otimes \vk_i\in \mathcal Y, i=1,2 \right\}\\
& \subseteq  \left\{ \frac{\vx_1\otimes \vk_1-(\frac{\|\vx_1\|_2}{\|\vx_2\|_2}\vx_2)\otimes \vk_1}{\|\vx_1\otimes \vk_1-\vx_2\otimes \vk_2\|_2},  \ \vx_i = \mPhi \vu_i, \vk_i =\mPsi \vv_i, i=1,2 \right\} \\ &+ \left\{\frac{(\frac{\|\vx_1\|_2}{\|\vx_2\|_2}\vx_2)\otimes \vk_1-\vx_2\otimes \vk_2}{\|\vx_1\otimes \vk_1-\vx_2\otimes \vk_2\|_2},\ \vx_i = \mPhi \vu_i, \vk_i =\mPsi \vv_i,\vx_i\otimes \vk_i\in \mathcal Y, i=1,2 \right\}.
\end{align*}
For the first set in the sum, by using \eqref{eq:lip} and \eqref{eq:sin1}, we have
\begin{align*}
& \left\{ \frac{\vx_1\otimes \vk_1-(\frac{\|\vx_1\|_2}{\|\vx_2\|_2}\vx_2)\otimes \vk_1}{\|\vx_1\otimes \vk_1-\vx_2\otimes \vk_2\|_2}, \  \vx_i = \mPhi \vu_i, \vk_i =\mPsi \vv_i, i=1,2 \right\} \\ &\subseteq \left\{ t\cdot \frac{\vx_1\otimes \vk_1-(\frac{\|\vx_1\|_2}{\|\vx_2\|_2}\vx_2)\otimes \vk_1}{\|\vx_1 \vk_1^T-\vx_2 \vk_2^T\|_2}, \ t\in [0,L], \ \vx_i = \mPhi \vu_i, \vk_i =\mPsi \vv_i,\vx_i\otimes \vk_i\in \mathcal Y, i=1,2 \right\} \\
&\subseteq \left\{ t\cdot \frac{\vx_1\otimes \vk_1-(\frac{\|\vx_1\|}{\|\vx_2\|}\vx_2)\otimes \vk_1}{\|\vx_1-\frac{\|\vx_1\|_2}{\|\vx_2\|_2}\vx_2\|\|\vk_1\|}, \ t\in [0,L], \ \vx_i = \mPhi \vu_i, \vk_i =\mPsi \vv_i, \vx_i\otimes \vk_i\in \mathcal Y,i=1,2 \right\}\\
& \subseteq \left\{ \left( \sqrt t\cdot \frac{\vx_1-\frac{\|\vx_1\|_2}{\|\vx_2\|_2}\vx_2}{\|\vx_1-\frac{\|\vx_1\|_2}{\|\vx_2\|_2}\vx_2\|_2}\right)\otimes \left(\sqrt t \cdot \frac{\vk_1}{\|\vk_1\|_2}\right), t\in [0,L],\  \vx_i = \mPhi \vu_i, \vk_i =\mPsi \vv_i, \vx_i\otimes \vk_i\in \mathcal Y,i=1,2 \right\}
\end{align*}
 
The covering number with $\epsilon$ balls of the set $\left\{\sqrt t\cdot \frac{\vx_1\otimes \vk_1-(\frac{\|\vx_1\|_2}{\|\vx_2\|_2}\vx_2)}{\|\vx_1\otimes \vk_1-(\frac{\|\vx_1\|_2}{\|\vx_2\|_2}\vx_2)\|_2}, t\in [0,L]\right\}$ is $\left(\frac{3\sqrt L}{\epsilon}\right)^n$, and that for the set $\{\sqrt t \cdot \frac{\vk_1}{\|\vk_1\|_2}, t\in[0,L]\}$ is $\left(\frac{3\sqrt L}{\epsilon}\right)^m$. So the covering number with $\epsilon$ balls of $S$ is  $$\left(\frac{6L}{\epsilon}\right)^n+\left(\frac{6L}{\epsilon}\right)^m.$$ The same argument holds for the second set in the sum. Therefore, $$\log \mathcal{N}(U_{\mathcal{Y}},\|\cdot\|_2,\delta)\lesssim  \max\{m,n\} \log \frac{L}{\delta}.$$

Now, let $\calS$ be any bounded subset of $\calY$. Note that the inverse map $F^{-1}$ is $L$-Lipschitz (i.e. $(L,\alpha)$-H\"{o}lder for $\alpha = 1$). Since $\calS \subset \calY$, we have that $U_{\calS} \subset U_{\calY}$, and thus, $\calN(U_{\calS},\|\cdot\|_2,\delta) \le \calN(U_{\calY},\|\cdot\|_2,\delta)$ for any $\delta > 0$. Therefore, the choice of $d$ in this proposition yields $$d \gtrsim \rho^{-2}\max\{m,n\}\log\dfrac{L\sqrt{N}}{\rho} \gtrsim \rho^{-2}\log\calN(U_{\calY},\| \cdot \|_2,\tfrac{\rho}{4\sqrt{3N}}) \ge \rho^{-2}\log\calN(U_{\calS},\| \cdot \|_2,\tfrac{\rho}{4\sqrt{3N}}).$$

Therefore, the conditions of Theorem~\ref{thm:HolderFNN} are satisfied for $\alpha = 1$, and the result follows.

\section{Proof of Proposition~\ref{prop:MatrixCompletion}}
\label{sec:ProofMatrixCompletion}

By definition,
\[
U_\mathcal{Y} = \left\{\frac{\vy_1-\vy_2}{\|\vy_1-\vy_2\|_2}, \vy_1,\vy_2\in \mathcal{Y} \right\} =  \left\{\frac{P_{\Omega}(\mX_1-\mX_2)}{\|P_{\Omega}(\mX_1-\mX_2)\|_F},\mX_1,\mX_2\in \mathcal{Y} \right\}
\]
Since $\vy_1-\vy_2 = P_{\Omega}(\mX_1-\mX_2)$ and $\mX$
\[
\left\{\frac{P_{\Omega}(\mX_1-\mX_2)}{\|P_{\Omega}(\mX_1-\mX_2)\|_F}, \mX_1,\mX_2\in \mathcal{Y} \right\} \subseteq  \left\{t\cdot P_{\Omega}\left( \frac{\mX_1-\mX_2}{\|\mX_1-\mX_2\|_F}\right), t \in [0,L], \mX_1,\mX_2\in \mathcal{Y} \right\}
\]
Notice that $\frac{\mX_1-\mX_2}{\|\mX_1-\mX_2\|_F}$ are matrices of unit Frobenius norm with rank at most $2r$. By Lemma 3.1 in \citep{candes2011tight}, they form a set whose covering number is at most $\left(\frac{9}{\delta}\right)^{r(m+n+1)}$. Hence, by dilating the set by a factor of $L$, the covering number is at most $\left(\frac{9L}{\delta}\right)^{r(m+n+1)}$. Therefore, $$\log \mathcal{N}(U_{\mathcal{Y}},\|\cdot\|_2,\delta)\lesssim  r(m+n) \log \frac{L}{\delta}.$$

Now, let $\calS$ be any bounded subset of $\calY$. Note that the inverse map $F^{-1}$ is $L$-Lipschitz (i.e. $(L,\alpha)$-H\"{o}lder for $\alpha = 1$). Since $\calS \subset \calY$, we have that $U_{\calS} \subset U_{\calY}$, and thus, $\calN(U_{\calS},\|\cdot\|_2,\delta) \le \calN(U_{\calY},\|\cdot\|_2,\delta)$ for any $\delta > 0$. Therefore, the choice of $d$ in this proposition yields $$d \gtrsim \rho^{-2}r(m+n)\log\dfrac{L\sqrt{mn}}{\rho} \gtrsim \rho^{-2}\log\calN(U_{\calY},\| \cdot \|_2,\tfrac{\rho}{4\sqrt{3mn}}) \ge \rho^{-2}\log\calN(U_{\calS},\| \cdot \|_2,\tfrac{\rho}{4\sqrt{3mn}}).$$

Therefore, the conditions of Theorem~\ref{thm:HolderFNN} are satisfied for $\alpha = 1$, and the result follows.
\end{document}